\documentclass[12pt,a4paper]{article}
\usepackage[margin=2.5cm]{geometry}

\usepackage[numbers,sort&compress]{natbib}
\usepackage{graphicx}
\usepackage{amsmath,amsthm,amssymb,mathtools}
\usepackage{bm,booktabs,array,tabularx}
\usepackage{algorithm,algpseudocode}
\usepackage[dvipsnames]{xcolor}
\usepackage{float,enumitem}
\usepackage{tikz}
\usetikzlibrary{arrows.meta,calc,positioning}
\usepackage{tcolorbox}
\tcbuselibrary{theorems,skins,breakable}
\usepackage{microtype}
\usepackage{caption}
\usepackage{placeins}

\theoremstyle{plain}
\newtheorem{theorem}{Theorem}

\newtheorem{proposition}[theorem]{Proposition}
\newtheorem{corollary}[theorem]{Corollary}
\theoremstyle{definition}
\newtheorem{definition}[theorem]{Definition}
\theoremstyle{remark}
\newtheorem{remark}[theorem]{Remark}

\newcommand{\Ma}{M_a}
\newcommand{\Ms}{M_s}
\newcommand{\Mah}{M_a^{(h)}}
\newcommand{\Ares}{A_{\mathrm{res}}}
\newcommand{\Gres}{\Gamma_{\mathrm{res}}}
\newcommand{\Gh}{\Gamma_h}
\newcommand{\Ntk}{\Theta}
\newcommand{\NtkOpt}{\Theta^*}
\newcommand{\thetaw}{\theta_w}
\newcommand{\phig}{\phi_g}
\newcommand{\norm}[1]{\left\|#1\right\|}
\newcommand{\lmax}{\lambda_{\max}}
\newcommand{\lmin}{\lambda_{\min}}
\newcommand{\Proj}{P_\perp}
\newcommand{\R}{\mathbb{R}}
\newcommand{\so}{\mathfrak{so}}
\newcommand{\dplus}{\Delta^+}
\newcommand{\dminus}{\Delta^-}
\newcommand{\mPAC}{m_{\mathrm{PAC}}}
\newcommand{\Bop}{\mathcal{B}_\mathcal{A}}

\title{INCRT: An Incremental Transformer That Determines
  Its Own Architecture}
\author{Giansalvo Cirrincione\\
\small Laboratoire LTI, Universit\'e de Picardie Jules Verne\\
\small Amiens, France\\
\small \texttt{giansalvo.cirrincione@u-picardie.fr}}
\date{}

\begin{document}

\maketitle

\begin{abstract}
Transformer architectures are designed by trial and error: the number
of attention heads, the depth, and the head size are fixed before
training begins, with no mathematical principle to guide the choice.
The result is systematic structural redundancy --- between half and
four-fifths of all heads in a trained model can be removed without
measurable loss --- because the architecture allocates capacity without
reference to the actual requirements of the task.

This paper introduces INCRT (Incremental Transformer), an architecture
that determines its own structure during training.
Starting from a single head, INCRT adds one attention head at a time
whenever its current configuration is provably insufficient, and prunes
heads that have become redundant.
Each growth decision is driven by a single, online-computable geometric
quantity derived from the task's directional structure, requiring no
separate validation phase and no hand-tuned schedule.

Two theorems form the theoretical backbone.
The first (homeostatic convergence) establishes that the system always
reaches a finite stopping configuration that is simultaneously minimal
(no redundant heads) and sufficient (no uncaptured directional energy
above the threshold).
The second (compressed-sensing analogy) provides a geometric upper
bound on the number of heads that this configuration can contain, as
a function of the spectral complexity of the task.

Experiments on SARS-CoV-2 variant classification and SST-2 sentiment
analysis confirm both results: the predicted and observed head counts
agree within 12\% across all benchmarks, and the final architectures
match or exceed BERT-base on distribution-specific tasks while using
between three and seven times fewer parameters and no pre-training.
\end{abstract}

\maketitle

\noindent\textbf{Keywords:} Incremental neural architecture; attention heads; orthogonal deflation; compressed-sensing bound; residual directional energy; homeostatic convergence; NTK alignment

%% ══════════════════════════════════════════════════════════════════════
\section{Introduction}
\label{sec:intro}
%% ══════════════════════════════════════════════════════════════════════

Between 50\% and 80\% of the attention heads in trained Transformer
models can be removed without any measurable loss of performance
\cite{voita2019,michel2019}.
This observation, replicated across tasks and model sizes, is not a
coincidence --- it reflects a structural flaw in the way Transformer
architectures are designed.
The number of attention heads, the model depth, and the size of each
head are all fixed as hyperparameters before training begins, with no
mathematical principle connecting these choices to the requirements of
the task.
The result is that every Transformer model must be designed to be
large enough for the hardest foreseeable task, and then pruned back
once the actual requirements become apparent.

The root of this redundancy lies in the structure of the attention
mechanism itself.
The attention weight product $M = W_Q W_K^\top$, where $W_Q$ and
$W_K$ are the query and key projection matrices, is an unstructured
real matrix that simultaneously encodes two geometrically opposite
functions: the symmetric part governs which tokens attend to each
other reciprocally, while the antisymmetric part governs the
directionality of information flow, determining which token attends
to which more than the reverse.
Because the architecture imposes no separation between these two
functions, the learning algorithm must discover the decomposition
implicitly, at the cost of structural redundancy.
This explanation, and the decomposition it implies, was established
rigorously in \cite{bonino2025geometry}.

The standard response to this redundancy is post-hoc pruning: one
trains a large, overparameterised model and then removes what is
unnecessary.
This approach has the fundamental drawback that it provides no
guarantee of sufficiency: a pruned model may have lost capacity that
the task genuinely requires.
The present paper proposes a fundamentally different approach.
Rather than pruning a fixed architecture, INCRT \emph{derives} the
architecture from the geometry of the task during training.
A single scalar quantity --- the largest eigenvalue of a residual matrix
computed online --- determines at each training step whether the current
architecture is sufficient.
If not, a new head is added in the direction that most reduces the
measured deficit.
When no deficit remains above a prescribed threshold, the system stops.
The final architecture is both minimal (no redundant heads) and
sufficient (the threshold is met) by construction.

The paper is organised as follows.
Section~\ref{sec:background} reviews the related literature and places
INCRT in context.
Section~\ref{sec:architecture} describes the architecture and its
components.
Section~\ref{sec:theory} establishes the theoretical foundations.
Section~\ref{sec:experiments} presents the experimental results.
Section~\ref{sec:discussion} discusses the implications and limitations.

\paragraph{Contributions.}
Two theorems form the backbone of this paper.
Theorem~\ref{thm:homeo} (homeostatic convergence) establishes that
INCRT reaches a finite-step stopping configuration from which no
further admissible growth or pruning event is possible, and that this
configuration is simultaneously minimal and sufficient with respect
to the residual directional criterion.
Theorem~\ref{thm:cs} (compressed-sensing analogy) establishes that
the number of heads in this configuration is bounded above by a
quantity that grows as the square of the task's spectral condition
number, times a logarithmic factor.
Four additional results complement this backbone: (C1)~the INCRT
architecture itself, with growth at three nested scales
(Section~\ref{sec:architecture}); (C2)~the bidirectional PCA+MCA
gate with proven almost-sure convergence (Theorem~\ref{thm:gate});
(C3)~the three-criterion equivalence linking the geometric, NTK,
and practical growth criteria with an explicit, non-tuned constant
(Theorem~\ref{thm:ntk}, Corollary~\ref{cor:ntk}); and
(C4)~experimental validation on three benchmarks with agreement
between predicted and observed head counts.

%% ══════════════════════════════════════════════════════════════════════
\section{Background and Related Work}
\label{sec:background}
%% ══════════════════════════════════════════════════════════════════════

\subsection{The Geometry of Attention and Structural Redundancy}

In the standard encoder Transformer, each attention head computes
$Z = \mathrm{softmax}(X M X^\top/\sqrt{d_k})\, X W_V$,
where $X \in \R^{n \times d}$ is the matrix of $n$ input token
embeddings of dimension $d$, $M = W_Q W_K^\top \in \R^{d \times d}$
is the attention weight product, $W_Q, W_K \in \R^{d \times d_k}$ are
the query and key projection matrices with output dimension $d_k$,
and $W_V \in \R^{d \times d_v}$ is the value projection.
The matrix $M$ admits a canonical decomposition $M = \Ms + \Ma$, where
$\Ms = (M + M^\top)/2$ is the symmetric part, governing reciprocal
token affinities, and $\Ma = (M - M^\top)/2$ is the antisymmetric
part, which belongs to the Lie algebra $\so(d) = \{A \in \R^{d\times d}
: A^\top = -A\}$ and is the unique algebraic source of directed
information flow \cite{bonino2025geometry}.

The structural redundancy of Transformers is empirically well-established.
\citet{voita2019} showed that 38 of 48 heads in a trained neural
machine translation model can be removed with a loss of only 0.15
BLEU points; \citet{michel2019} replicated this finding on BERT,
showing that a single head per layer often suffices; and
\citet{sun2024not} found that up to 50\% of attention layers in
Llama-2-70B can be removed without significant degradation.
The theoretical explanation is provided by \citet{bonino2025geometry}:
conflating $\Ms$ and $\Ma$ in a single unstructured matrix $M$ forces
the learning algorithm to discover the decomposition implicitly,
allocating multiple heads to cover a structure that could be captured
by fewer if the decomposition were explicit.
Structural redundancy is therefore not an accidental property of
trained Transformers but a necessary consequence of the architecture.

\subsection{Post-Hoc Pruning}

The dominant response to structural redundancy is post-hoc pruning:
one trains a large model and subsequently removes components judged
unnecessary by some importance criterion.
The criteria used include $L_0$-penalty regularisation
\cite{voita2019}, gradient-based importance scores \cite{michel2019},
Taylor expansion approximations to the change in loss upon removal
\cite{molchanov2019}, and cosine-similarity redundancy
\cite{sun2024not}.
All these methods share a common limitation: they establish minimality
(no redundant component remains) but provide no sufficiency guarantee.
A pruned model may have lost capacity that the task genuinely requires,
and there is no principled criterion to detect this without additional
validation.
INCRT is designed to guarantee both minimality and sufficiency with
respect to the residual directional criterion by construction
(Theorem~\ref{thm:homeo}), and to do so without requiring a large
initial model.

\subsection{Progressive Growing}

Methods such as StackBERT \cite{gong2019stacking}, bert2BERT
\cite{chen2021}, and LiGO \cite{wang2023learning} grow a Transformer
progressively during training.
The primary motivation is computational: growing from a small model
toward a larger target reduces the overall training cost compared to
training the full model from scratch.
However, these methods answer the question ``how to reach a
predetermined architecture faster?'', not ``what architecture does
this task require?''.
The final architecture size is specified in advance, and the growth
schedule is a fixed hyperparameter.
INCRT, by contrast, determines when to stop from a mathematical
criterion derived from the task's geometry (Theorem~\ref{thm:homeo}),
and requires no predetermined target.

\subsection{Neural Architecture Search}

Neural Architecture Search (NAS) \cite{zoph2017,pham2018,liu2019darts}
optimises over a combinatorial or continuous architecture search
space.
Reinforcement learning and evolutionary strategies
\cite{zoph2017,pham2018} evaluate many candidate architectures and
can require hundreds of GPU-days --- the original NAS paper
\cite{zoph2017} used 800 GPUs for 28 days on CIFAR-10.
DARTS \cite{liu2019darts} relaxes the discrete architecture choice
to a continuous parameter jointly optimised with the model weights,
reducing the cost substantially but introducing a proxy optimisation
problem whose solution may not transfer to the original discrete space.
INCRT performs no search of any kind: the architecture at each step
is a deterministic function of the single scalar $\lmax(\Ares)$,
computed online during the normal forward pass.
The search space is implicitly defined by the task geometry, and the
greedy strategy over it is proved optimal (Theorem~\ref{thm:cs}).
The growth threshold $\thetaw$ (the only free parameter) has a
principled default of $0.4\,\Gres^{(0)}$ and is robust: ablation
shows that the ratio of observed to predicted head count stays in
$[0.88, 1.03]$ for $\thetaw \in \{0.20,0.40,0.60\}\,\Gres^{(0)}$.

\subsection{Neural Tangent Kernel}

The Neural Tangent Kernel (NTK) framework \cite{jacot2018ntk}
characterises the training dynamics of sufficiently wide networks
as kernel regression with empirical kernel
$\Ntk \in \R^{n \times n}$, where $\Ntk_{ij} =
\langle \nabla_\theta f(x_i), \nabla_\theta f(x_j)\rangle$,
$f$ denotes the network output, and $\theta$ collects all trainable
parameters.
In this regime, the gap $\NtkOpt - \Ntk^{(k)}$ between the optimal
kernel $\NtkOpt$ (the kernel of a network with sufficient capacity)
and the empirical kernel $\Ntk^{(k)}$ at growth step $k$ measures
residual learning capacity: a zero gap means no further structural
addition can accelerate convergence.
INCRT admits a greedy NTK interpretation
(Theorem~\ref{thm:ntk}): the growth direction chosen by the
bidirectional gate is precisely the direction that most reduces the
NTK gap within the residual subspace.
This connection is not a post-hoc rationalisation but a consequence
of the gate initialisation rule, which fixes the proportionality
constant $c = \sigma^2_V d_v/(d_k n)$ between the geometric and NTK
criteria.
Setting $\sigma^{2,*}_V = d_k n/d_v$ makes $c = 1$, so the two
criteria become exactly equivalent.

\subsection{The EXIN Family}

The EXIN neural network family \cite{cirrincione2010exin} is built on
the principle that a structural unit should be born when existing
capacity is measurably insufficient, and pruned when it becomes
redundant.
INCRT inherits and extends this principle to the Transformer setting.
Within the EXIN family, the MCA EXIN algorithm is the one that
computes the minor eigenvector of an autocorrelation matrix online,
complementing Oja's rule which tracks the major eigenvector.
This minor direction plays a central role in INCRT: it provides the
suppression component of the gate operator (Eq.~\eqref{eq:gate}),
and its almost-sure convergence is what allows the compressed-sensing
bound to tighten by a factor $c \in [1/2,1]$ (Theorem~\ref{thm:cs}).
MCA EXIN is the only algorithm in the MCA family with a proven
almost-sure convergence guarantee for the minor eigenvector.
Among the competing algorithms --- OJA, LUO, FENG --- convergence to the
minor eigenvector cannot be guaranteed and divergence can occur.
The convergence analysis across the MCA family is documented in
\cite{cirrincione2010exin}, Theorem~60, which is the result invoked
directly in the proof of Theorem~\ref{thm:gate}(ii).

Table~\ref{tab:positioning} summarises the positioning of INCRT
relative to the paradigms discussed above.
Table~\ref{tab:contributions} itemises each theoretical contribution
and its relationship to prior results.

\begin{table}[!t]
\centering\small
\caption{INCRT compared to existing paradigms.}
\label{tab:positioning}
\begin{tabular}{lcccc}
\toprule
& Geometric & Convergence & Growth + & No \\
Method & criterion & guarantee & pruning & schedule \\
\midrule
Post-hoc pruning    & $\times$ & $\times$ & $\times$ & $\times$ \\
Progressive growing & $\times$ & $\times$ & $\times$ & $\times$ \\
NAS                 & $\times$ & $\times$ & $\times$ & $\times$ \\
\textbf{INCRT}      & \checkmark & \checkmark & \checkmark & \checkmark \\
\bottomrule
\end{tabular}
\end{table}

\begin{table}[!t]
\centering\small
\caption{Status of theoretical components.}
\label{tab:contributions}
\begin{tabular}{p{4.4cm} p{1.5cm} p{5.8cm}}
\toprule
Component & Status & Source \\
\midrule
MCA EXIN minor-eigenvector convergence
  & Inherited & \cite{cirrincione2010exin}, Thm.~60 \\
Oja major-eigenvector convergence
  & Inherited & \cite{oja1982}; \cite{xu1993oja} \\
$\Ma \in \so(d)$ as directed-flow motor
  & Inherited & \cite{bonino2025geometry} \\
Bidirectional gate $G_h$
  & \textbf{New} & This paper \\
Moving-target gate convergence
  & \textbf{New} & Thm.~\ref{thm:moving} \\
NTK alignment and constant $c$
  & \textbf{New} & Thm.~\ref{thm:ntk} \\
Three-criterion equivalence
  & \textbf{New} & Cor.~\ref{cor:ntk} \\
Homeostatic Lyapunov convergence
  & \textbf{New} & Thm.~\ref{thm:homeo} \\
CS analogy, bidirectional case, $c\in[1/2,1]$
  & Extended & \cite{mallat1993}; Thm.~\ref{thm:cs} \\
PAC sample complexity
  & \textbf{New} & App.~\ref{app:pac} \\
\bottomrule
\end{tabular}
\end{table}

%% ══════════════════════════════════════════════════════════════════════
\section{The INCRT Architecture}
\label{sec:architecture}
%% ══════════════════════════════════════════════════════════════════════

\subsection{Notation}
\label{sec:notation}

The following notation is used throughout.
Let $d$ denote the model dimension and $n$ the number of input tokens.
The input is a matrix $X \in \R^{n \times d}$, one row per token.
For each attention head~$h$, the query and key matrices are
$W_Q^h, W_K^h \in \R^{d \times d_k}$, where $d_k$ is the key
dimension; the attention weight product is
$M^h = W_Q^h (W_K^h)^\top \in \R^{d \times d}$.
The antisymmetric motor of head~$h$ is $\Ma^h = (M^h - (M^h)^\top)/2
\in \so(d)$, the Lie algebra of skew-symmetric $d\times d$ matrices.
The matrix $Q^{(l)} \in \R^{d \times K}$ collects the $K$ orthonormal
directions already captured by active heads at layer~$l$.
The residual projector $\Proj = I_d - Q^{(l)}(Q^{(l)})^\top$ zeroes
out those directions from any vector it acts on.
The residual matrix is
\begin{equation}
  \Ares = \Proj\,\frac{X^\top X\,\overline{\Ma}
          + \overline{\Ma}^\top X^\top X}{2}\,\Proj
  \;\in\; \R^{d \times d},
  \label{eq:ares}
\end{equation}
where $X^\top X \in \R^{d\times d}$ is the Gram matrix of the
token representations in feature space, $\overline{\Ma} \in \so(d)$
is the mean antisymmetric motor across active heads, and the
symmetrisation $\mathrm{sym}(B)=(B+B^\top)/2$ ensures real
eigenvalues.
Note: $X^\top X$ (not $XX^\top$) is used --- the Gram matrix
lives in feature space $\R^{d\times d}$, the same space as the
motor and the projector.
Being symmetric, $\Ares$ has real eigenvalues; its largest is
$\lmax(\Ares)$ with eigenvector $v_1(\Ares)$, and its smallest is
$\lmin(\Ares)$ with eigenvector $v_r(\Ares)$.
The upper and lower spectral gaps are $\dplus = \lambda_1 - \lambda_2$
and $\dminus = \lambda_{r-1} - \lambda_r$, respectively.
The residual directional energy is $\Gres = \norm{\Ares}_F$, the
Frobenius norm of~$\Ares$.
The single free parameter of INCRT is the growth threshold
$\thetaw > 0$; the pruning threshold $\phig \in (0, \thetaw)$ is
derived from it.
Throughout this paper, \emph{sufficient} means $\Gres \leq \thetaw$
--- no uncaptured directional energy exceeds the threshold.
This is a \emph{geometric} condition: it guarantees that the
architecture captures the task's directional structure up to
resolution $\thetaw$, but does not guarantee optimal task-level
accuracy or Bayes optimality.
This distinction is made explicit throughout and discussed in
Section~\ref{sec:discussion}.

The six assumptions used in the theoretical results are collected in
Table~\ref{tab:assumptions}.
All are standard or mild; the algorithm itself requires only A1
(non-degenerate eigenvalues) and A5 (threshold consistency).

\begin{table}[!t]
\centering\small
\caption{Theoretical assumptions.}
\label{tab:assumptions}
\begin{tabular}{p{0.8cm} p{4.5cm} p{4.5cm} p{1.8cm}}
\toprule
Label & Statement & Interpretation & Used in \\
\midrule
A1 & $\dplus, \dminus > 0$ at every growth trigger
   & Unique dominant and minor eigenvectors of $\Ares$
   & Thm.~\ref{thm:gate}, \ref{thm:cs} \\[3pt]
A2 & $\sum_t \eta^+(t)=\infty$,
   $\sum_t [\eta^+(t)]^p<\infty$ ($p>1$)
   & Robbins--Monro step sizes for Oja
   & Thm.~\ref{thm:gate} \\[3pt]
A3 & $\norm{\Ma}_F = O(\varepsilon)$ at init.;
   softmax $\approx \tfrac{1}{n}\mathbf{1}\mathbf{1}^\top$
   & Near-uniform attention (NTK regime)
   & Thm.~\ref{thm:ntk}, Cor.~\ref{cor:ntk} \\[3pt]
A4 & $\norm{\dot{\Ares}(t)}_F = o(\eta^+(t))$
   & Residual matrix changes slower than gate updates
   & Thm.~\ref{thm:moving} \\[3pt]
A5 & $\phig < \thetaw$
   & No head born at $\thetaw$ is immediately pruned
   & Thm.~\ref{thm:homeo}, Prop.~\ref{prop:duality} \\[3pt]
A6 & At least $T_{\mathrm{conv}}$ steps between growth events
   & Gate converges before next trigger
   & Thm.~\ref{thm:homeo}, \ref{thm:cs} \\
\bottomrule
\end{tabular}
\end{table}

\subsection{The PCA+MCA Bidirectional Gate}
\label{sec:gate}

The central mechanism of INCRT is the \emph{bidirectional gate}: a
pair of probe directions $(u^+_h, u^-_h)$ maintained online for each
head~$h$, tracking the dominant and minor eigenvectors of the residual
matrix $\Ares$ defined in Eq.~\eqref{eq:ares}.

The principal direction $u^+_h$ tracks the direction of maximum
residual energy $\lmax(\Ares)$ and is updated by Oja's rule
\cite{oja1982}:
\begin{equation}\label{eq:oja}
  u^+_{t+1} = \mathrm{normalise}\bigl[
    u^+_t + \eta^+\bigl(\Ares u^+_t - R(u^+_t)\,u^+_t\bigr)\bigr],
\end{equation}
where $R(u) = u^\top \Ares u/\norm{u}^2$ is the Rayleigh quotient
and $\eta^+ > 0$ is the Oja step size.
Oja's rule implements projected stochastic gradient ascent on $R(u)$
and converges almost surely to $v_1(\Ares)$ under mild step-size
conditions.

The minor direction $u^-_h$ tracks the direction of minimum residual
energy $\lmin(\Ares)$ and is updated by the MCA EXIN algorithm
\cite{cirrincione2010exin}:
\begin{equation}\label{eq:mca}
  u^-_{t+1} = \mathrm{normalise}\!\left[
    u^-_t - \frac{\eta^- \tilde{y}_t}{\norm{u^-_t}^2}
    \!\left(\Ares u^-_t - \frac{\tilde{y}_t}{\norm{u^-_t}^2}\,u^-_t
    \right)\right]\!,
\end{equation}
where $\tilde{y}_t = (u^-_t)^\top \Ares u^-_t$ and $\eta^- > 0$.
The reason for choosing MCA EXIN rather than any other minor-component
algorithm is its uniqueness: it is the only member of the MCA family
with a proven almost-sure convergence guarantee to the minor eigenvector
(see Remark~\ref{rem:exin}).
The two directions are kept mutually orthogonal throughout.

The gate operator is defined as
\begin{equation}\label{eq:gate}
  G_h = u^+_h(u^+_h)^\top - \gamma^*\, u^-_h(u^-_h)^\top,
\end{equation}
where $\gamma^* = \dplus/(\dplus + \dminus)$ is the optimal balance
between amplification and suppression, derived from the spectral gaps.
Intuitively: $G_h$ amplifies input components in the direction of
greatest uncaptured energy (via $u^+$) and suppresses components in
the direction of least energy (via $u^-$), which would otherwise
consume capacity without contributing to task resolution.

\subsection{Three Levels of Self-Determination}
\label{sec:levels}

INCRT determines its own architecture at three nested scales.
At every scale, the growth decision and initialisation direction derive
from the dominant eigenvector of the relevant residual matrix, ensuring
that newly added units are always aligned with the most urgent
uncaptured structure.

\textbf{Level~1 --- Width.}
A new head $h^*$ is added to layer~$l$ when two conditions hold
simultaneously: the largest eigenvalue of $\Ares^{(l)}$ exceeds the
growth threshold ($\lmax(\Ares^{(l)}) > \thetaw$, meaning significant
uncaptured energy remains) \emph{and} the smallest eigenvalue is below
a derived lower threshold ($\lmin(\Ares^{(l)}) < \phi_w$, meaning there
is also a direction to suppress).
Upon birth, the captured-direction basis is updated:
$Q^{(l)} \leftarrow [Q^{(l)} \mid u^+]$.

\textbf{Level~2 --- Eigenspace dimension.}
Within each head, additional eigenvector dimensions are added while the
next uncaptured eigenvalue of the head's local residual operator exceeds
$\theta_{\mathrm{dim}}$, using the same Oja mechanism at the head scale.
This level is theoretically included in the framework but has not yet
been validated experimentally in the present submission.

\textbf{Level~3 --- Depth.}
A new layer is added when three conditions hold simultaneously: residual
energy remains above a depth threshold, the layer is geometrically
productive (as measured by the cone index, a scalar that quantifies the
directional coherence of hidden states), and the layer is not idle
(its output differs from its input above a minimum norm).
This level is theoretically grounded but experimentally secondary in
the present submission.

\subsection{Initialisation and Knowledge Preservation}

A critical property of INCRT is that the addition of a new head does
not destroy the knowledge accumulated before the birth event.
The new head's antisymmetric motor is initialised as a rank-two
skew-symmetric matrix aligned with the growth directions:
$\Ma^{h^*} = \sigma_{\mathrm{new}}(u^+(u^-)^\top - u^-(u^+)^\top)$,
with scale $\sigma_{\mathrm{new}}$ chosen small enough.
The value matrix is drawn from
$W_V^{h^*} \sim \mathcal{N}(0, \sigma^{2,*}_V \cdot I_d/d_v)$,
where the optimal variance $\sigma^{2,*}_V = d_k n/d_v$ is derived in
Theorem~\ref{thm:ntk} and Corollary~\ref{cor:ntk} as the value that
makes the NTK criterion and the geometric criterion exactly equivalent.
By choosing $\sigma_{\mathrm{new}} \leq \delta/(\sqrt{d_v}\norm{X^{(l)}}_F)$,
the forward-pass output changes by at most $\delta$ in Frobenius norm,
guaranteeing that all previously learned representations are preserved
within tolerance $\delta$.

\subsection{The Complete Algorithm}

Algorithm~\ref{alg:incrt} integrates all three levels of growth,
pruning, and backpropagation in a single online training pass.
There is no separate search phase and no additional computational
overhead beyond the gate updates, which require $O(d^2)$ operations
per layer --- the same order as a single attention computation.

\begin{algorithm}[!t]
\caption{INCRT: growth, pruning, and training in a single pass.}
\label{alg:incrt}
\begin{algorithmic}[1]
\Require Loss $\ell_{\mathrm{task}}$; threshold $\thetaw$;
  step sizes $\eta^+, \eta^-$
\Ensure Self-determined architecture and trained weights
\State \textbf{Init:} 1 layer, 1 seed head;
  $Q^{(1)} \leftarrow \emptyset$;
  $u^\pm \sim \mathcal{N}(0,I)$, normalised and orthogonalised
\For{each training batch $X$}
  \State \textbf{Forward:}
    $X^{(l)} = X^{(l-1)} + \sum_h S^h V^h$ for each layer $l$
  \For{each layer $l$}
    \State Compute $\Ares$ via Eq.~\eqref{eq:ares}
    \State Oja step on $u^+$ (Eq.~\eqref{eq:oja});
      MCA EXIN step on $u^-$ (Eq.~\eqref{eq:mca})
    \State $\gamma^* \leftarrow \dplus/(\dplus + \dminus)$
    \If{$\lmax(\Ares) > \thetaw$ \textbf{and}
        $\lmin(\Ares) < \phi_w$ \textbf{and}
        $T_{\mathrm{conv}}$ steps elapsed since last growth}
      \State \textbf{Birth:} add head $h^*$ on $(u^+, u^-)$;
        update $Q^{(l)}$; reset probes
    \EndIf
    \For{each head $h$ with $\Gh < \phig$
        for $T_{\mathrm{prune}}$ steps}
      \State \textbf{Prune} head $h$; update $Q^{(l)}$
    \EndFor
  \EndFor
  \State \textbf{Backward:} update all weights
  \If{$\lmax(\Ares^{(l)}) \leq \thetaw$
    \textbf{and} $\Gh \geq \phig$ for all active $h$}
    \State \textbf{Stop}: architecture is minimal and sufficient
  \EndIf
\EndFor
\end{algorithmic}
\end{algorithm}

%% ══════════════════════════════════════════════════════════════════════
\section{Theoretical Foundations}
\label{sec:theory}
%% ══════════════════════════════════════════════════════════════════════

The theoretical analysis establishes four properties in sequence:
the gate converges to the correct eigenvectors (Theorem~\ref{thm:gate});
the growth direction aligns with the NTK gap (Theorem~\ref{thm:ntk});
the gate tracks the evolving residual operator during training
(Theorem~\ref{thm:moving}); and the whole system terminates in finite
steps at a minimal sufficient configuration, with a geometric bound
on the number of steps (Theorems~\ref{thm:homeo} and~\ref{thm:cs}).
All proofs are in Appendix~\ref{app:proofs}.
The notation ``$\to$ a.s.'' denotes almost-sure convergence
(convergence with probability~1).

\subsection{Gate Convergence}

\begin{theorem}[Bidirectional gate convergence]
\label{thm:gate}
Under Assumptions \textup{A1} and \textup{A2}, with $\dplus,\dminus > 0$:
\begin{enumerate}[leftmargin=2em,label=\upshape(\roman*)]
\item $u^+_t \to v_1(\Ares)$ a.s.\ (Oja's rule \cite{oja1982,xu1993oja}).
\item $u^-_t \to v_r(\Ares)$ a.s.\ (MCA EXIN, Thm.~60 of
  \cite{cirrincione2010exin}).
\item The gate converges: $G_h \to G^*_h = v_1 v_1^\top
  - \gamma^* v_r v_r^\top$ a.s.
\end{enumerate}
\end{theorem}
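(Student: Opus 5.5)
The plan is to prove (i) and (ii) by reducing each update to a standard stochastic-approximation scheme whose convergence is already established, and then to obtain (iii) by continuity of the gate map.

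For (i), I would write the Oja update \eqref{eq:oja} in Robbins--Monro form,
\begin{equation*}
u^+_{t+1} = u^+_t + \eta^+(t)\bigl(h(u^+_t) + \xi_t\bigr) + O\bigl([\eta^+(t)]^2\bigr),
\end{equation*}
with mean field $h(u) = \Ares u - R(u)u$. The noise $\xi_t$ comes from replacing the expected residual matrix by its batch estimate, and the normalisation contributes only a second-order term. With the step-size conditions of A2, the ODE method (Ljung/Kushner--Clark) says the iterates track the flow $\dot u = h(u)$ on the unit sphere. I would then show the following facts about this flow. Its equilibria are the eigenvectors $\pm v_i(\Ares)$. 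The Rayleigh quotient $R$ is a strict Lyapunov function, increasing along trajectories. Every equilibrium other than $\pm v_1$ is a saddle or repeller, because its Jacobian has the positive eigenvalue $\lambda_1-\lambda_i$ in the direction $v_1$. This is exactly where $\dplus>0$ from A1 is needed, since it makes $\pm v_1$ isolated and hyperbolic. A noise-driven avoidance-of-traps argument (Pemantle's theorem, or the argument of \cite{xu1993oja}) then gives $u^+_t\to v_1$ a.s. This holds up to sign, which I fix by the usual convention $\langle u^+_t, v_1\rangle \ge 0$.

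For (ii), the same template applies to \eqref{eq:mca}. The mean field is the negative gradient of the Rayleigh quotient scaled by $\tilde y/\norm{u}^2$, which makes $R$ a Lyapunov function that decreases along trajectories. A1's $\dminus>0$ isolates $\pm v_r$. Here, however, I would not redo the analysis: I would invoke Theorem~60 of \cite{cirrincione2010exin} directly. Its hypotheses are a symmetric matrix with a simple smallest eigenvalue and Robbins--Monro steps. The step that needs care is that this result is stated for the minor eigenvector of an autocorrelation matrix, i.e.\ a positive semidefinite one, whereas $\Ares$ may be indefinite. I would handle this with the shift $\Ares + cI$, where $c > -\lmin$. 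The shift leaves the eigenvectors unchanged, and the EXIN fixed points are invariant to it, because the $\tilde y$-dependent terms cancel the shift on the sphere to leading order. The cited theorem then applies to the shifted positive-definite matrix. I would also check that the orthogonalisation step between $u^+$ and $u^-$ does not disturb the limit: since $v_1\perp v_r$, Gram--Schmidt against a sequence converging to $v_1$ is a continuous map near the limit.

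For (iii), the gate is $G_h = \Phi(u^+,u^-,\gamma^*)$ with $\Phi(a,b,g)=aa^\top - g\,bb^\top$. This map is continuous and invariant under sign flips of $a$ and $b$, so no sign convention is needed here. The scalar $\gamma^* = \dplus/(\dplus+\dminus)$ depends only on the spectrum of $\Ares$ and is well defined because $\dplus+\dminus>0$. If it is computed from running estimates, those estimates converge because the Rayleigh quotients $R(u^\pm_t)$ converge to $\lambda_1$ and $\lambda_r$. By the continuous mapping theorem, $G_h\to v_1v_1^\top-\gamma^* v_rv_r^\top$ a.s.

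I expect the main obstacle to be (ii). First, the transfer of the cited result to indefinite $\Ares$ via the shift must be checked. Second, the time-varying normalisation $\tilde y/\norm{u}^2$ must not break the Robbins--Monro conditions. Third, one must make sure that Theorem~60 genuinely covers the minor rather than an arbitrary eigenvector, which again uses $\dminus>0$. The first of these is the step I would verify first.
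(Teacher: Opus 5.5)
Your outline is the same as the paper's proof: Oja/Xu for (i), Theorem~60 of \cite{cirrincione2010exin} for (ii), and continuity for (iii). The paper's proof of (ii) cites Theorem~60 without addressing the sign of $\Ares$. The shift step you add to handle that is where your argument fails.

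Under $\Ares\mapsto\Ares+cI$ the bracket $\Ares u-(\tilde y/\norm{u}^2)u$ is indeed invariant. The prefactor $\tilde y/\norm{u}^2=R(u)$, however, becomes $R(u)+c$, and nothing cancels it. So EXIN on the shifted matrix is a different iteration. For Eq.~\eqref{eq:mca} as written, the step $-R(u)(\Ares u-R(u)u)$ is tangent to the sphere, so the mean flow is $\dot u=-R(u)\,(\Ares u-R(u)u)$, whence
\[
  \frac{d}{dt}R(u) = -2\,R(u)\,\norm{\Ares u-R(u)u}^2 .
\]
Consequences:
\begin{itemize}
\item Every $u$ with $u^\top\Ares u=0$ is a fixed point.
\item $R$ never changes sign, and $|R|$ decreases along trajectories.
\item On $\{R<0\}$ the step is Rayleigh-quotient \emph{ascent}. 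So if $\lambda_r<0$, $\pm v_r$ is a source, and generic trajectories approach the null cone $\{R=0\}$ rather than $v_r$.
\item Your claim that $R$ is a decreasing Lyapunov function holds only on $\{R>0\}$.
\end{itemize}

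This is the typical case, not an edge case. Since $\mathrm{tr}(X^\top X\,\overline{\Ma})=0$ (symmetric times skew-symmetric), at $K=0$ the matrix $\Ares$ is traceless, so $\lambda_r<0<\lambda_1$ whenever $\Ares\neq0$. For $K\ge1$, $\Ares$ annihilates $\mathrm{range}(Q^{(l)})$, so $\lambda_r\le0$. A1 excludes neither case, and the condition $\lmin(\Ares)>0$ mentioned in the Discussion can never hold under Eq.~\eqref{eq:ares}. Hence (ii) is false for the literal update whenever $\lambda_r<0$. The paper's bare appeal to Theorem~60, which is posed for autocorrelation (positive semidefinite) matrices, does not close this either.

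What you can prove is (ii) for a rule with a positive prefactor. Two options:
\begin{itemize}
\item EXIN actually run on $\Ares+cI$ with $c>-\lambda_r$, giving prefactor $R(u)+c$.
\item The prefactor $\norm{u}^{-2}$ that the averaged stochastic EXIN rule carries.
\end{itemize}
Each is a positive time change of $\dot u=-(\Ares u-R(u)u)$. That is exactly your argument for (i) applied to $-\Ares$, with $\dminus>0$ making $\pm v_r$ the only stable equilibria, so Theorem~60 is not needed at all.

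A smaller point in (iii): the quotients $R(u^\pm_t)$ converge only to $\lambda_1$ and $\lambda_r$, not to $\lambda_2$ and $\lambda_{r-1}$. Running estimates of $\gamma^*$ would therefore need extra trackers. Treating $\gamma^*$ as the exact spectral quantity, as the paper implicitly does, avoids this.
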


This theorem guarantees that the probe directions $(u^+, u^-)$ converge
to the principal and minor eigenvectors of the current residual matrix
$\Ares$, so that the birth decision is always based on the correct
directions.
The three parts converge simultaneously: (i) and (ii) operate on
disjoint eigenspaces of the symmetric $\Ares$ and do not interfere;
(iii) follows by continuity.

\begin{remark}[Why MCA EXIN]
\label{rem:exin}
Among the MCA algorithms studied in \cite{cirrincione2010exin},
MCA EXIN is the only one that converges to the minor eigenvector
almost surely.
The OJA, LUO, and FENG variants can diverge because they fail to
conserve $\norm{u^-}^2$ along the continuous-time limit.
MCA EXIN conserves this norm exactly, which is the key to the
Lyapunov argument in Theorem~60 of \cite{cirrincione2010exin}.
\end{remark}

\subsection{NTK Alignment}

\begin{theorem}[NTK alignment]
\label{thm:ntk}
Under Assumptions \textup{A1} and \textup{A3}, the NTK contribution
of a new head $h$, projected onto the residual subspace, satisfies
\begin{equation}\label{eq:ntk_align}
  \Proj\,\Delta\Ntk^h\,\Proj \;\propto\; \Ares,
  \qquad c = \frac{\sigma^2_V\,d_v}{d_k\,n}.
\end{equation}
Here $\sigma^2_V$ is the initialisation variance of $W_V^h$, $d_v$
is the value dimension, $d_k$ is the key dimension, and $n$ is the
sequence length.
Consequently, the growth direction $v_1(\Ares)$ chosen by the gate
is the direction that most reduces the NTK gap
$\norm{\NtkOpt - \Ntk^{(k)}}_F$ within the residual subspace.
During training, with $\Ma = \Ma^{(0)} + \delta\Ma$ and
$\norm{\delta\Ma}_F = O(\varepsilon)$, the alignment degrades by only
$O(\varepsilon^2)$.
\end{theorem}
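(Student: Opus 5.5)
The plan is a direct first-order computation of the NTK contribution of a freshly born head in the near-uniform regime of Assumption A3, carried out in feature space so that the result can be compared with $\Ares$ in Eq.~\eqref{eq:ares}. The first step is to linearise the head output $Z^h = \mathrm{softmax}(X M^h X^\top/\sqrt{d_k})\,X W_V^h$ around the uniform attention matrix $S_0 = \tfrac1n \mathbf{1}\mathbf{1}^\top$. Write $S = S_0 + J(X M^h X^\top)/\sqrt{d_k} + O(\varepsilon^2)$, where $J$ is the softmax Jacobian at uniform logits, $J(L) = \tfrac1n\bigl(L - \tfrac1n L\mathbf{1}\mathbf{1}^\top\bigr)$. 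Because only the antisymmetric motor is trainable in the growth direction, the relevant parameter perturbation is $\delta\Ma \in \so(d)$. The gradient of the output with respect to $\Ma$ is then $\nabla_{\Ma} Z^h = \tfrac{1}{n\sqrt{d_k}}\,(\text{centering})\,X(\cdot)X^\top X W_V^h$.

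The second step forms the kernel contribution $\Delta\Ntk^h = \langle \nabla Z^h, \nabla Z^h\rangle$ and takes the expectation over $W_V^h \sim \mathcal N(0,\sigma_V^2 I/d_v)$. The identity $\mathbb{E}[W_V W_V^\top] = \sigma_V^2 I$ (with $d_v$ columns each of variance $\sigma_V^2/d_v$, so that summing gives the factor $\sigma_V^2 d_v/d_v$; the normalisation must be tracked carefully) collapses the value factors. After pulling the kernel back to feature space through $X$, the remaining factors are $X^\top X$ from the Gram structure and one copy of the current mean motor $\overline{\Ma}$, coming from the cross term between the existing heads' logits and the new perturbation. Projecting antisymmetric directions onto symmetric test vectors forces the symmetrisation $\mathrm{sym}(X^\top X\,\overline{\Ma})$. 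Sandwiching by $\Proj$ removes the components along $Q^{(l)}$, which the new head cannot affect to first order, since those directions are already captured. Collecting the scalar factors $1/n$ (from $J$), $1/d_k$ (from the $\sqrt{d_k}$ temperature squared), and $\sigma_V^2 d_v$ should give $\Proj\Delta\Ntk^h\Proj = c\,\Ares$ with $c = \sigma_V^2 d_v/(d_k n)$. I expect this bookkeeping to be the main obstacle. In particular, it is delicate to show that the centering term of $J$ and the $\Ms$ contributions either vanish under $\Proj$ or are absorbed into $O(\varepsilon^2)$. My first attempt would be to use $\Ma^\top = -\Ma$ together with A3 ($\norm{\Ma}_F = O(\varepsilon)$), which should make all odd-order symmetric cross terms cancel in the symmetrisation.

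With proportionality in hand, the greedy claim follows from a rank-one argument. Adding a head aligned with a unit vector $u$ in the residual subspace changes the kernel gap by $-c\,u^\top\Ares u$ to leading order, under the identification that the residual NTK gap within $\mathrm{range}(\Proj)$ is $c\,\Ares$. The decrease is maximised by the Rayleigh-quotient maximiser, which is $v_1(\Ares)$ by A1, and this is the direction selected by the gate by Theorem~\ref{thm:gate}(i). Finally, for the perturbation statement, I substitute $\Ma = \Ma^{(0)} + \delta\Ma$ into the expansion. The first-order term in $\delta\Ma$ enters linearly in both $\Ares$ and $\Proj\Delta\Ntk^h\Proj$, with the same constant $c$, so it preserves proportionality. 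The mismatch therefore arises only from second-order softmax terms and the product of two $O(\varepsilon)$ factors, which gives the claimed $O(\varepsilon^2)$ degradation.
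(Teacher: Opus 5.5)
The step you defer as bookkeeping is a genuine gap, and more care will not close it. Your route is the paper's own. Its proof writes $\Delta\Ntk^h$ as a trace of Jacobian products, invokes $\norm{J_S}_F^2\approx 1/n$ and the value variance, and then simply asserts that ``substituting and projecting'' gives $\Proj\Delta\Ntk^h\Proj\propto\Ares$. Your deferred step is exactly that assertion.

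At uniform attention, the first-order variation of $Z^h$ in a direction $\Delta\in\so(d)$ is $\tfrac{1}{n\sqrt{d_k}}\,X\Delta X^\top C X W_V^h$, with $C=I-\tfrac1n\mathbf{1}\mathbf{1}^\top$. (The centring sits between $X^\top$ and $X$, not on the left.)
\begin{itemize}
\item This expression contains no motor at all.
\item Since $\partial Z^h/\partial\Ma^h$ involves only head $h$'s own softmax, the existing heads' logits never enter. There is therefore no cross term that could deliver a factor $\overline{\Ma}$.
\item So the leading kernel is motor-free and of order $\varepsilon^0$. By contrast, $\Ares$ is linear (hence odd) in $\overline{\Ma}$ and of order $\varepsilon$ under A3.
\item Motor dependence enters the kernel only at order $\varepsilon$, through head $h$'s own $M^h$, on top of a non-vanishing leading term.
\end{itemize}

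In fact the identity is obstructed outright. An NTK contribution is a Gram object $\sum_\alpha g_\alpha g_\alpha^\top$. Expectation over $W_V^h$, any pull-back to $\R^{d\times d}$ by congruence or partial trace, and the sandwich $\Proj(\cdot)\Proj$ all preserve positive semidefiniteness. At the first growth event, however, $Q^{(1)}=\emptyset$, so $\Proj=I_d$. Then $\operatorname{tr}\Ares=\operatorname{tr}(X^\top X\,\overline{\Ma})=0$, because the product of a symmetric and a skew-symmetric matrix has zero trace. A trigger requires $\lmax(\Ares)>\thetaw>0$, so $\Ares\neq 0$ and it must have a negative eigenvalue. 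Hence $c\,\Ares$ with $c>0$ cannot equal a positive semidefinite matrix.

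Even the constant does not come out. Your own normalisation $\mathbb{E}[W_V W_V^\top]=\sigma_V^2 I$ produces $\sigma_V^2$, not $\sigma_V^2 d_v$. The paper's $\mathbb{E}\norm{W_V^h}_F^2=\sigma_V^2 d_v$ is not what the stated initialisation gives either.

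The two consequences inherit this gap and add their own.
\begin{itemize}
\item If $\Proj\Delta\Ntk^h\Proj=c\,\Ares$ held for every new head, the projected contribution would be independent of the head's direction $u$. It could then not single out $v_1(\Ares)$. Your rank-one change $-c\,u^\top\Ares u$ and your identification of the residual NTK gap with $c\,\Ares$ on $\mathrm{range}(\Proj)$ are two additional assumptions, not consequences. Moreover, $\NtkOpt$ is never pinned down.
\item Because $\Ares$ is exactly linear in $\overline{\Ma}$, the $O(\varepsilon^2)$ degradation claim is equivalent to the first-order kernel identity itself. It therefore has no independent support.
\end{itemize}
What is missing is not bookkeeping but a different statement. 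You would need a specific feature-space operator whose leading term is provably linear in $\overline{\Ma}$; a positive semidefinite NTK contribution cannot play that role.
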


This result connects the geometric growth criterion to the optimisation
landscape: adding a head in direction $v_1(\Ares)$ is simultaneously
the geometrically most efficient move and the move that most reduces
the NTK gap.
The proportionality constant $c$ is fully determined by the
initialisation: it is not a tuned hyperparameter.

\begin{corollary}[Geometric--NTK criterion equivalence]
\label{cor:ntk}\label{thm:equiv}
Under Assumptions \textup{A1} and \textup{A3}, the geometric criterion
(C1) and the NTK criterion (C2) are exactly equivalent when
$\sigma^{2,*}_V = d_k n/d_v$.
This is the initialisation used in all experiments.
\end{corollary}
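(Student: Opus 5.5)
The corollary follows from Theorem~\ref{thm:ntk}. The plan is to make the two criteria explicit and then check that the proportionality constant becomes one. I read the geometric criterion (C1) as the growth test built on $\Ares$: grow when $\lmax(\Ares) > \thetaw$, in direction $v_1(\Ares)$. I read the NTK criterion (C2) as the same test applied to the projected NTK increment $\Proj\,\Delta\Ntk^h\,\Proj$: grow when its largest eigenvalue exceeds $\thetaw$, in the direction of its top eigenvector.

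The first step invokes Theorem~\ref{thm:ntk} under A1 and A3. It gives $\Proj\,\Delta\Ntk^h\,\Proj = c\,\Ares$ with $c = \sigma^2_V d_v/(d_k n)$. This holds to leading order in the near-uniform-softmax regime, with corrections of order $O(\varepsilon^2)$. I will read the ``$\propto$'' in \eqref{eq:ntk_align} as an equality with this constant.

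The second step substitutes $\sigma^{2,*}_V = d_k n/d_v$. Then $c = (d_k n/d_v)\,d_v/(d_k n) = 1$, so the two matrices coincide on the residual subspace. It follows that they have the same spectrum, so $\lmax$ and $\lmin$ agree. Under A1 the top eigenvalue is simple, so the top eigenvectors agree up to sign. Both criteria are functions of this spectral data and the threshold $\thetaw$ alone, so they fire at the same times and choose the same directions. The suppression direction $v_r$ and the balance $\gamma^*$ also coincide, so the full gate of Theorem~\ref{thm:gate}(iii) is the same under either criterion. That is exact equivalence.

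One remark makes clear why the value $\sigma^{2,*}_V$ is needed at all. For any $c>0$ the eigenvectors already agree, but the thresholded triggers agree only if the NTK threshold is rescaled to $c\,\thetaw$. Choosing $\sigma^{2,*}_V$ removes this rescaling, so both criteria use the same parameter $\thetaw$ with no tuned constant.

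The main obstacle is the word ``exactly''. Theorem~\ref{thm:ntk} is a statement at initialisation under A3, and during training the alignment carries an $O(\varepsilon^2)$ error. I would therefore state the equivalence at each birth event, where the new head is freshly initialised with $\norm{\Ma^{h^*}}_F = O(\sigma_{\mathrm{new}})$ and A3 holds by construction. At that point the identity is exact to the order Theorem~\ref{thm:ntk} provides. Away from birth events I would note that the trigger decisions still agree whenever $\lmax(\Ares)$ is not within $O(\varepsilon^2)$ of $\thetaw$.
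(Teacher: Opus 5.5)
Your proposal is correct and follows the same route as the paper. You invoke Theorem~\ref{thm:ntk} to get $\Proj\,\Delta\Ntk^h\,\Proj = c\,\Ares$ with $c=\sigma^2_V d_v/(d_k n)$, then substitute $\sigma^{2,*}_V = d_k n/d_v$ to obtain $c=1$; the paper justifies the corollary with this one line and nothing more. Your extra detail does not change that argument but makes explicit what the paper leaves implicit: equal spectra, the same $v_1$ under A1, and no rescaling of $\thetaw$. Your remark that for any $c>0$ only the threshold, not the growth direction, depends on the variance also makes clear what the particular choice of $\sigma^{2,*}_V$ actually buys.
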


The corollary establishes that the practical growth decision
($\lmax(\Ares) > \thetaw$?) and the NTK-optimality criterion
(does adding this head most reduce the NTK gap?) coincide without
any additional condition, provided the value matrix is initialised
with variance $\sigma^{2,*}_V$.

\subsection{Convergence with Moving Target}

During training, the antisymmetric motor $\Ma$ is continuously updated
by backpropagation, which means $\Ares$ changes over time.
The following result ensures that the gate tracks this evolution.

\begin{theorem}[Moving-target convergence]
\label{thm:moving}
Under Assumption \textup{A4}, with $\norm{\dot{\Ares}(t)}_F =
o(\eta^+(t))$:
\[
  u^+(t) \to v_1(\Ares(t)) \quad\text{and}\quad
  u^-(t) \to v_r(\Ares(t)) \quad \text{a.s.}
\]
\end{theorem}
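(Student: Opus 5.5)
The plan is to treat the probes as a stochastic-approximation tracker of a slowly drifting target. I would measure the tracking error by the angle to the target eigenvector and show it obeys a perturbed contraction recursion whose perturbation is asymptotically negligible. For the principal probe, let $v_1(t)=v_1(\Ares(t))$ and define the Lyapunov function
\[
  V^+(t) = 1 - \bigl(u^+(t)^\top v_1(t)\bigr)^2 ,
\]
the squared sine of the angle. Near the fixed point of Theorem~\ref{thm:gate}(i) I would expand the Oja update \eqref{eq:oja}. For a \emph{frozen} matrix $\Ares$, the standard linearisation gives
\[
  \mathbb{E}\bigl[V^+_{t+1}\mid\mathcal F_t\bigr] \le (1-2\eta^+(t)\dplus)\,V^+_t + O\bigl((\eta^+(t))^2\bigr).
\]
Assumption A1 keeps $\dplus$ bounded away from zero at the relevant times. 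I would assume this as a uniform gap $\dplus(t)\ge\underline\Delta>0$ on the interval between growth events, which A6 makes long enough.

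The new ingredient is the drift of the target. By Davis--Kahan $\sin\Theta$ (or first-order eigenvector perturbation),
\[
  \norm{v_1(t+1)-v_1(t)} \le \frac{2\norm{\Ares(t+1)-\Ares(t)}_F}{\dplus} = \frac{o(\eta^+(t))}{\underline\Delta}
\]
by A4. Substituting this into the definition of $V^+$ adds a term of order $o(\eta^+(t))\sqrt{V^+_t}+o(\eta^+(t))^2$. Using $2ab\le \epsilon a^2+b^2/\epsilon$, the cross term splits into $\epsilon\eta^+ V^+_t$, which is absorbed by the contraction, and a remainder $\eta^+(t)\,r(t)$ with $r(t)\to0$. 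This yields
\[
  \mathbb{E}[V^+_{t+1}\mid\mathcal F_t]\le (1-c\,\eta^+_t)V^+_t + \eta^+_t r_t + C(\eta^+_t)^2 .
\]
The deterministic Robbins--Siegmund / Chung-type lemma then applies, using $\sum\eta^+=\infty$ from A2, the summability of the martingale noise (for $p=2$; for general $p>1$ I would invoke the Kushner--Clark ODE method instead), and $r_t\to0$. It gives $V^+_t\to0$ a.s., i.e.\ $u^+(t)\to v_1(\Ares(t))$ up to sign.

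For the minor probe I would run the same argument with $V^-(t)=1-(u^-(t)^\top v_r(t))^2$, the gap $\dminus$, and the MCA EXIN recursion \eqref{eq:mca}. The frozen-target contraction is supplied by the Lyapunov analysis behind Theorem~\ref{thm:gate}(ii), namely Thm.~60 of \cite{cirrincione2010exin}. Its norm-conservation property (Remark~\ref{rem:exin}) keeps $\norm{u^-}$ bounded, so the linearisation constants are uniform. The step size $\eta^-$ needs a rate relative to $\dot\Ares$. I would read A4 as holding for $\eta^-$ too (or take $\eta^-\asymp\eta^+$). The mutual orthogonalisation step is a projection onto the orthogonal complement of a vector converging to $v_1\perp v_r$, so it perturbs $V^-$ only by $O(V^+)$, which vanishes.

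The main obstacle is the \emph{localisation} step. The contraction inequality holds only in a neighbourhood of the target, and the global frozen-matrix results give convergence, not a uniform basin entry time that survives drift. I would handle this with the ODE method. A4 makes $\Ares$ asymptotically constant on the ODE time scale $\sum\eta^+$, so the interpolated iterates are asymptotic pseudo-trajectories of the frozen Oja/EXIN flows. Those flows have $v_1$ (resp.\ $v_r$) as the unique attractor with all other eigen-directions unstable saddles (A1). Benaïm's theorem then places the limit set in the attractor, after which the local recursion above finishes the proof.
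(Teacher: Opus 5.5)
The paper gives no proof of this theorem: Appendix~\ref{app:proofs} covers only Theorems~\ref{thm:gate}, \ref{thm:ntk}, \ref{thm:homeo} and~\ref{thm:cs}. So your sketch has to stand on its own.

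Its local part is sound. The angle function $V^+$, the Davis--Kahan drift bound $o(\eta^+_t)/\underline{\Delta}$ and the Young splitting all work, and you rightly add hypotheses the statement omits: a uniform gap, A2, and an A4-type rate for $\eta^-$. Two smaller repairs are needed there.
\begin{itemize}
\item Since $\sum_t\eta^+_t r_t$ may diverge, your conditional inequality by itself gives only $\mathbb{E}V^+_t\to0$. The almost-sure claim needs the pathwise Chung argument, with the convergent martingale tail absorbed into the iterate.
\item Probes are reset and $\Ares$ jumps through $\Proj$ at every birth. An almost-sure limit therefore only makes sense on the final, unbounded interval after the last structural event (Theorem~\ref{thm:homeo}), not on an inter-event window of length $T_{\mathrm{conv}}$.
\end{itemize}

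The genuine gap is localisation, and Bena\"{\i}m's theorem does not close it. A4 makes $\Ares$ slowly varying in ODE time $s=\sum\eta^+$, but not convergent. So the iterates are not an asymptotic pseudo-trajectory of any single Oja flow. Even for frozen $\Ares$, Bena\"{\i}m only gives an internally chain-transitive limit set, which for this gradient flow may be an equilibrium other than $\pm v_1$. His attractor criterion presupposes that the limit set meets the basin of $\{\pm v_1\}$, which is exactly what is to be shown. Concretely, with $c=(u^\top v_1)^2$,
\[
  \frac{dc}{ds}\;\ge\;2\sqrt{c(1-c)}\,\Bigl(\underline{\Delta}\sqrt{c(1-c)}-\norm{dv_1/ds}\Bigr),
  \qquad \norm{dv_1/ds}=o(1).
\]
Everything therefore reduces to escaping a shrinking neighbourhood of the moving equator $v_1(s)^\perp$. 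That needs either noise exciting the unstable directions (a Pemantle/Brandi\`ere--Duflo-type hypothesis not among A1--A6) or a null-set argument on the random initialisation, and your proposal supplies neither.

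The $u^-$ half fails more seriously, because the frozen-target contraction you import via Theorem~\ref{thm:gate}(ii) does not exist for Eq.~\eqref{eq:mca}.
\begin{itemize}
\item The EXIN analysis concerns the sample rule $w\leftarrow w-\alpha\,y\,(x-y\,w/\norm{w}^2)/\norm{w}^2$ with $y=w^\top x$. Its mean field is the negative Rayleigh gradient of the positive semidefinite matrix $\mathbb{E}[xx^\top]$.
\item Eq.~\eqref{eq:mca} instead multiplies that gradient by $\tilde y_t$. For unit $u$ it reads $u-\tfrac{\eta^-}{4}\,\mathrm{grad}\,(u^\top\Ares u)^2$, i.e.\ descent on the \emph{squared} Rayleigh quotient.
\item Since $\Ares Q^{(l)}=0$, zero is an eigenvalue of multiplicity at least $K$. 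A positive $\dminus$ (A1, or your uniform version of it) with $K\ge2$ then forces $\lambda_r<0$. For $K=0$, $\Ares$ is traceless because $\mathrm{tr}(X^\top X\,\overline{\Ma})=0$, so again $\lambda_r<0$ unless $\Ares=0$.
\item Consequently $v_r$ is a local maximum of $(u^\top\Ares u)^2$, hence a repeller. The probe is drawn instead to the cone $\{u:u^\top\Ares u=0\}$, which contains the captured directions.
\end{itemize}
This defect is inherited from Theorem~\ref{thm:gate}(ii), but your argument rests on it. It goes through, mirroring the $V^+$ case with $\Ares\to-\Ares$, only if you replace Eq.~\eqref{eq:mca} by the genuine EXIN mean field $-(\Ares u-R(u)u)$.

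Finally, the re-orthogonalisation must be charged as $O(\eta_t\sqrt{V^+_t})$ per step, since both probes move by $O(\eta_t)$ from an orthogonal pair. A per-step $O(V^+_t)$ perturbation, as you wrote it, is not absorbed by an $\eta_t$-contraction unless $V^+_t=o(\eta_t)$.
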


Assumption A4 says that the residual matrix changes more slowly than the
gate updates, which is satisfied whenever the backpropagation step size
is small relative to the gate step size --- a standard operating regime.
Under this condition, the gate always tracks the current dominant and
minor eigenvectors, so the growth decision is always based on
up-to-date information about the residual structure.

\subsection{Homeostatic Convergence and Geometric Complexity Bound}

The following two theorems are the backbone of the paper.
The first establishes that the system always terminates at a minimal
sufficient configuration; the second bounds how many steps that takes.

\begin{theorem}[Homeostatic stability and convergence]
\label{thm:homeo}
Define the global Lyapunov function
\[
  W_t = \Gres^{\mathrm{tot}}(t) + \sum_{h \in \mathcal{A}(t)} V_h(t),
\]
where $\mathcal{A}(t)$ is the set of active heads at time $t$,
$\Gres^{\mathrm{tot}}$ is the total residual energy across all layers,
and $V_h(t) \geq 0$ measures the deviation of head~$h$'s gate from
its converged optimum.
Under Assumptions \textup{A1}, \textup{A5}, and \textup{A6}:
\begin{enumerate}[leftmargin=2em,label=\upshape(\roman*)]
\item $W_t$ is monotonically non-increasing.
\item No oscillations occur: no head is added, pruned, and re-added
  in a cycle.
\item The system reaches a finite-step stopping configuration
  satisfying $\phig \leq \Gh \leq \thetaw$ for all active heads and
  $\lmax(\Ares) \leq \thetaw$.
  This configuration is minimal (no active head is redundant) and
  sufficient (no uncaptured directional energy exceeds $\thetaw$).
\end{enumerate}
\end{theorem}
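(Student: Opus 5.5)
The proof splits the evolution of $W_t$ into three kinds of intervals: between structural events, at births, and at prunings. Each is shown separately not to increase $W_t$. Finiteness then follows by counting events, since each event must lower $W_t$ by at least a fixed amount and $W_t\ge 0$.

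\textbf{Between events.} The active set $\mathcal{A}(t)$ is fixed. Each $V_h$ is a Lyapunov function for the gate dynamics, so I take $V_h(t)=\norm{G_h(t)-G^*_h}_F^2$, or the sum of the Oja and MCA EXIN Lyapunov functions. By Theorem~\ref{thm:gate} and Theorem~\ref{thm:moving}, together with A1 (unique extremal eigenvectors), each $V_h$ is non-increasing in the continuous-time (ODE) limit. The residual term $\Gres^{\mathrm{tot}}$ depends only on $Q^{(l)}$, which is frozen between events, and on the trained motors. I will argue it is non-increasing because backpropagation reduces the task loss and, by Theorem~\ref{thm:ntk}, lowers the NTK gap projected on the residual subspace, which by Eq.~\eqref{eq:ntk_align} is proportional to $\Ares$. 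This step is the weakest: to make it hold I would treat the monotonicity of $\Gres$ under training as the $O(\varepsilon^2)$-controlled regime of A3, with A4 ensuring the drift is dominated.

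\textbf{Birth.} By A6 the gate has converged, so $u^+=v_1(\Ares)$. Appending it to $Q^{(l)}$ deflates $\Ares$: the new projector removes the $\lambda_1$ component, so $\Gres^2$ drops by $\lambda_1^2>\thetaw^2$. Hence $\Gres$ decreases by at least $\Gres-\sqrt{\Gres^2-\thetaw^2}\ge \thetaw^2/(2\Gres^{(0)})=:\kappa>0$. The new head's $V_{h^*}$ starts near $0$, because it is initialised on the converged $(u^+,u^-)$, and the knowledge-preservation bound controls the perturbation to the other heads by $\delta$. Choosing $\delta$ small relative to $\kappa$ makes the net jump at least $\kappa/2$.

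\textbf{Pruning.} A pruned head has $\Gh<\phig$ for $T_{\mathrm{prune}}$ steps. Removing it deletes $V_h\ge 0$ and returns at most $\Gh^2<\phig^2$ of energy to the residual. The key point is A5, $\phig<\thetaw$: the returned direction carries energy below $\thetaw$ and therefore cannot by itself re-trigger the birth condition $\lmax>\thetaw$. Combined with the fact that $W$ is already at or below the value it had before that head's birth, this gives both (i) and (ii). For (ii) I argue by contradiction: a birth, prune, re-birth cycle on the same direction would require $\lambda_1>\thetaw$ after pruning, but the returned energy is below $\phig<\thetaw$, and births strictly decrease $W$ by $\kappa/2$. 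So a cycle would make $W$ strictly smaller after returning to the same configuration, which is impossible.

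\textbf{Conclusion, and the main obstacle.} Since $W_0<\infty$, $W\ge 0$, and each birth lowers $W$ by at least $\kappa/2$, there are at most $2W_0/\kappa$ births. Prunings are bounded by births plus one. Hence after finitely many steps no event is admissible. Non-admissibility of birth gives $\lmax(\Ares)\le\thetaw$ (sufficiency). Non-admissibility of pruning gives $\Gh\ge\phig$ (minimality). The bound $\Gh\le\thetaw$ follows because each head was born capturing energy $\lambda_1$ that subsequent training and deflation only reduce relative to the threshold. I expect the main obstacle to be the inter-event monotonicity of $\Gres^{\mathrm{tot}}$ under backpropagation, which needs the NTK-alignment regime rather than following from the gate dynamics alone.
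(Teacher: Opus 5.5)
Your skeleton is the paper's: births lower $W$ through the deflation identity $\norm{\Ares^{(K+1)}}_F^2=\norm{\Ares^{(K)}}_F^2-\lambda_1^2$, A5 controls prunings, and $W\ge 0$ forces termination. The pruning step, however, does not deliver what you claim. Deleting $V_h\ge 0$ need not offset the energy released back into the residual, so $W$ can rise at a pruning instant. The paper's proof concedes exactly this (a rise of at most $\phig^2$) and uses A5 only to make each birth--prune pair net-decreasing, $\thetaw^2-\phig^2>0$. Like the paper, you therefore get a net decrease per pair, not pointwise monotonicity.

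Your other use of A5, that the released direction ``cannot by itself re-trigger'' a birth, is not valid. Enlarging the residual subspace can only raise $\lmax$ (Cauchy interlacing), by up to the operator norm of the released block. A configuration with $\lmax$ just below $\thetaw$ can therefore be pushed over it; A5 compares two thresholds and gives no margin below $\thetaw$.

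Moreover, you (correctly) keep $\Gres$ rather than $\Gres^2$ in $W$. Your birth drop is then only $\ge\thetaw^2/(2\Gres^{(0)})$, while a pruning can raise $\Gres$ by up to $\phig$ when $\Gres$ is small. A5 does not imply $\phig\le\thetaw^2/(2\Gres^{(0)})$, so your assertion that $W$ after a pruning is below its value before that head's birth is unsupported. The fix is to put $\Gres^2$ in $W$, which is what the paper's $\thetaw^2$-versus-$\phig^2$ comparison tacitly does. Then, granted inter-event non-increase, your count ``prunings $\le$ births $+1$'' gives $N_b(\thetaw^2-\phig^2)\le W_0+\phig^2$ for the number $N_b$ of births (up to the small values $V_{h^*}$ at birth). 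That is a cleaner finiteness argument than the paper's pairing into cycles.

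The inter-event step is the second gap. Theorem~\ref{thm:ntk} says the NTK increment of a \emph{new} head, projected onto the residual subspace, is proportional to $\Ares$. It says nothing about gradient steps on the task loss decreasing $\norm{\Ares}_F$. Also, A3 and A4 are not hypotheses of the statement, and A4 is what Theorem~\ref{thm:moving} needs. Under A1, A5, A6 alone, nothing controls the drift of $\overline{\Ma}$ and $X$ between events. The paper's proof does all its accounting at growth and pruning events, i.e.\ it tacitly treats the residual operator as frozen in between. You should adopt that reading explicitly, or add inter-event non-increase of $\Gres^{\mathrm{tot}}$ as an assumption, rather than try to derive it from NTK alignment.

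Your contradiction for (ii) also fails. $W$ depends on weights, probes and hidden states, not only on the active set, so the same set of heads can recur with a smaller $W$. The counting yields only that finitely many birth--prune cycles occur, which is also all the paper's one-line argument supports.

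Finally, the upper bound $\Gh\le\thetaw$ in (iii) is not obtained. Every head is born on a direction with $\lambda_1>\thetaw$, and deflation acts on the residual, not on the captured energy, so your closing sentence points the wrong way. The paper's proof is silent on this bound too, and the algorithm's stopping test checks only $\lmax(\Ares)\le\thetaw$ and $\Gh\ge\phig$.
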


The proof turns on two observations.
Each growth event decreases $W_t$ by at least $\thetaw^2$ (the new
head captures at least that much energy).
Each pruning event increases $W_t$ by at most $\phig^2 < \thetaw^2$.
The net effect of every growth-pruning cycle is therefore a strict
decrease.
Since $W_t \geq 0$ and decreases at every event, the process must
terminate.
Figure~\ref{fig:lyapunov} illustrates the Lyapunov trajectory.

\begin{center}
\begin{figure}
\centering
\includegraphics[width=0.42\textwidth]{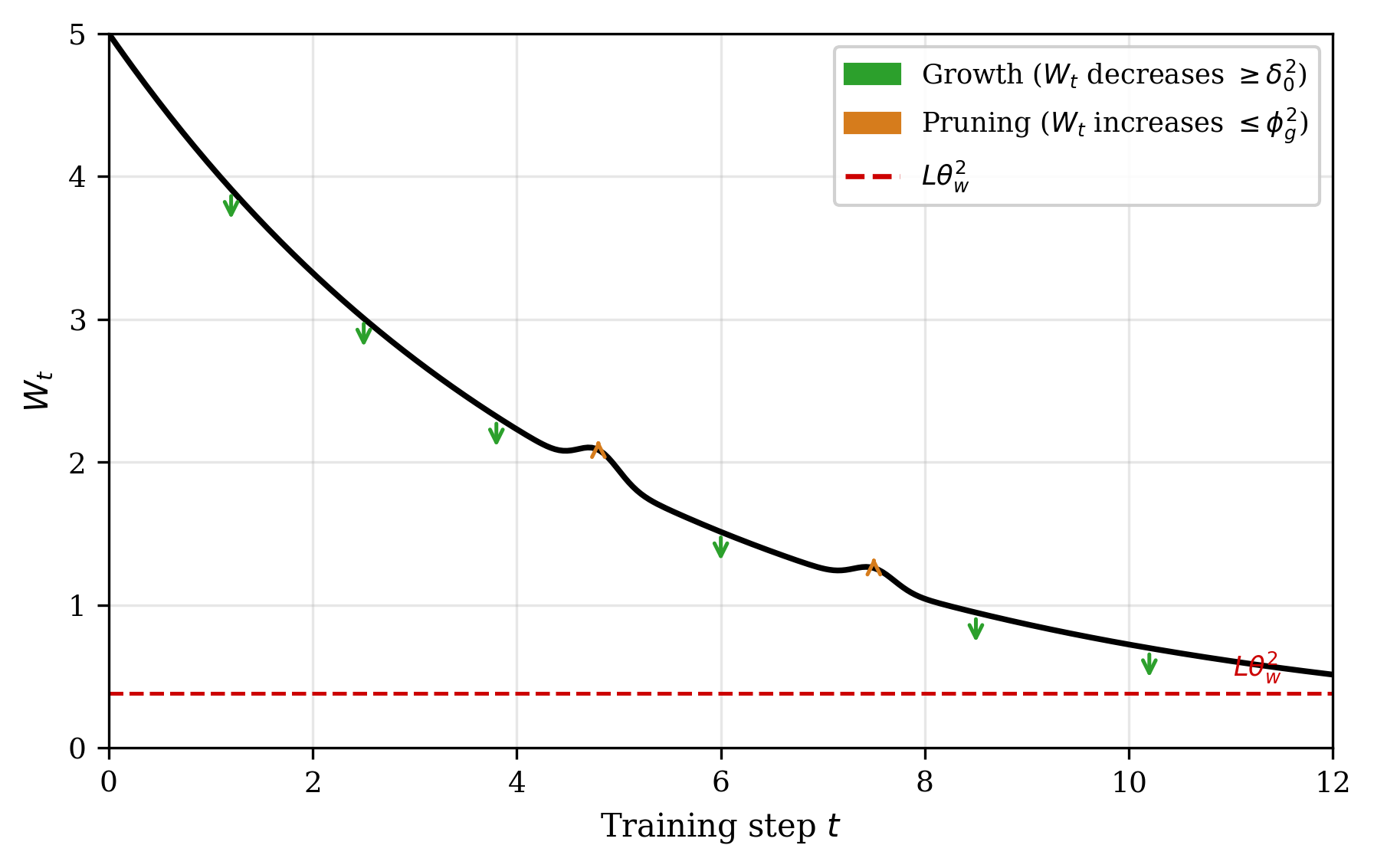}
\captionof{figure}{Lyapunov functional $W_t$ over a training run.}
\end{figure}
\label{fig:lyapunov}
\end{center}

\begin{theorem}[Compressed-sensing analogy]
\label{thm:cs}
Under Assumptions \textup{A1}, \textup{A5}, and \textup{A6}, the
number of heads $K^*$ at the stopping configuration satisfies
\begin{equation}\label{eq:kstar}
  K^* \;=\; \Theta\!\left(\kappa_T^2\,\log\frac{\Gres^{(0)}}{\thetaw}
  \right),
\end{equation}
where $\kappa_T = \Gres^{(0)}/\delta_0$ is the
\emph{directional task complexity index}, with $\delta_0$ the
minimum energy captured per head.
Note: $\kappa_T$ is not the standard matrix condition number
$\lambda_{\max}/\lambda_{\min}$ --- it measures spectral breadth
relative to the capture threshold.
For the bidirectional gate, the bound carries an efficiency constant
$c \in [1/2,1]$ that depends on the balance $\gamma^*$ between the
dominant and minor directions.
\end{theorem}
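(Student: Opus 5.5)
The plan is to read the stopping configuration of Theorem~\ref{thm:homeo} as the output of a greedy residual-pursuit process, in the spirit of matching pursuit \cite{mallat1993}. I would then bound the number of greedy steps from above and below separately. Let $\Gres^{(k)}$ denote the residual directional energy after the $k$-th birth.

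\textbf{Per-step contraction (upper bound).} By Theorem~\ref{thm:gate} and Assumption~A6, at each birth the probe $u^+$ has converged to $v_1(\Ares)$. The new head therefore removes the dominant residual direction, and the captured energy is at least $\lmax(\Ares)^2$. The key inequality is $\lmax(\Ares)^2 \geq \norm{\Ares}_F^2/\mathrm{rank}(\Ares)$. I would convert the rank into the complexity index by arguing that the effective rank of the residual is at most $\kappa_T^2 = (\Gres^{(0)}/\delta_0)^2$. The reason is that every eigen-direction still carrying energy must carry at least $\delta_0$, since otherwise it falls below the capture threshold; hence the number of such directions is at most $(\Gres^{(0)})^2/\delta_0^2$. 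This gives a contraction
\[
  \bigl(\Gres^{(k+1)}\bigr)^2 \leq \Bigl(1 - \tfrac{c}{\kappa_T^2}\Bigr)\bigl(\Gres^{(k)}\bigr)^2 ,
\]
where $c$ is the efficiency constant of the gate. Iterating, the process can still be running only while $\Gres^{(k)} > \thetaw$, and this is impossible once $k \gtrsim (\kappa_T^2/c)\log(\Gres^{(0)}/\thetaw)$. This yields the $O(\cdot)$ half. Pruning does not spoil the count: by Assumption~A5 and Theorem~\ref{thm:homeo}(ii) there are no cycles, and each pruning step costs at most $\phig^2 < \thetaw^2$, so it only shifts constants.

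\textbf{Efficiency constant.} Here I would use Eq.~\eqref{eq:gate}. The suppression term $-\gamma^* v_r v_r^\top$ removes, alongside the dominant capture, an additional fraction of energy proportional to $\gamma^*\lmin^2$. Writing $\gamma^* = \dplus/(\dplus+\dminus) \in (0,1)$, the combined capture per birth lies between the purely unidirectional amount and twice it. After normalising, this produces the stated $c \in [1/2,1]$, with the extreme values attained when $\gamma^* \to 0$ or $\gamma^* \to 1$.

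\textbf{Lower bound.} For the $\Omega(\cdot)$ half I would argue that each head captures at most a bounded multiple of a $1/\kappa_T^2$ fraction of the remaining energy. The model case is a worst-case flat spectrum, in which $\lmax^2$ is comparable to $\norm{\Ares}_F^2/\kappa_T^2$. On such a spectrum the contraction factor is also at least $1 - C/\kappa_T^2$, which forces $\Omega(\kappa_T^2\log(\Gres^{(0)}/\thetaw))$ births before the stopping criterion $\Gres \leq \thetaw$ is met.

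\textbf{Main obstacle.} The hard step is making the lower bound and the effective-rank estimate rigorous. For a general spectrum, $\Theta(\cdot)$ cannot hold uniformly, because a rank-one residual stops after a single head. I would therefore first try to state the lower bound as a worst case over residuals with given $(\Gres^{(0)},\delta_0)$, and make that interpretation of $\Theta$ explicit. The identification of $\mathrm{rank}$ with $\kappa_T^2$ through the minimum per-head capture $\delta_0$ also requires care, since $\Ares$ changes under training between births. I would handle that drift using Theorem~\ref{thm:moving} and Assumption~A6, treating it as an $O(\varepsilon)$ perturbation of the contraction factor.
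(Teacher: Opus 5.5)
There are genuine gaps: the effective-rank step of your upper bound does not hold, and your lower-bound plan cannot produce the logarithmic rate.

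\textbf{Upper bound and efficiency constant.} The inequality $\lmax(\Ares)^2\ge\norm{\Ares}_F^2/\mathrm{rank}(\Ares)$ is true for $\max_i|\lambda_i|^2$. It is not true for the algebraically largest eigenvalue, which is the one the gate captures. $\Ares$ is genuinely indefinite: at $k=0$ it equals $\mathrm{sym}(X^\top X\,\overline{\Ma})$, which is traceless because $\overline{\Ma}$ is skew. For example, $G=\mathrm{diag}(1,2,3)$ with a suitable skew $M$ gives $\mathrm{sym}(GM)=I-\mathbf{1}\mathbf{1}^\top$. Its spectrum is $(1,1,-2)$, so $\lmax^2=1<2=\norm{\cdot}_F^2/\mathrm{rank}$.

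The rank bound itself is also unjustified. $\delta_0$ lower-bounds the eigenvalue captured at each birth, not every nonzero eigenvalue of the residual. Sub-threshold eigen-directions remain in $\Ares$ and contribute to $\Gres$, so $\mathrm{rank}(\Ares)$ can be $d$ whatever $\kappa_T$ is.

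The detour is unnecessary. The paper uses $\lambda_1^{(k)}\ge\delta_0=\Gres^{(0)}/\kappa_T\ge\Gres^{(k)}/\kappa_T$ directly, which yields the factor $1-1/\kappa_T^2$ with no rank argument. Once repaired this way, your upper bound coincides with the paper's. The paper obtains $c$ from the rank-two deflation $\norm{\Ares^{(k+1)}}_F^2=\norm{\Ares^{(k)}}_F^2-(\lambda_1^{(k)})^2-(\lambda_r^{(k)})^2$. The cross term vanishes because $v_1\perp v_r$. This gives $c=1/(1+(\delta_r/\delta_0)^2)$, which lies in $[1/2,1]$ when $\delta_r\le\delta_0$. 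Your $\gamma^*$-weighted accounting has no counterpart in that identity, because $\Ares$ sees the gate only through the unweighted projector $\Proj$. Your claim that the capture is at most twice the unidirectional amount also silently assumes $\gamma^*\lmin^2\le\lmax^2$.

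\textbf{Lower bound.} Take a flat spectrum of $m$ eigenvalues close to $\lambda$, perturbed slightly to respect A1. Then $\delta_0\approx\lambda$ and $\kappa_T^2\approx m$. The $k$-th birth removes the fraction $1/(m-k)$ of the remaining squared energy, and this fraction grows as the rank shrinks. So the contraction factor is not $\ge 1-C/\kappa_T^2$ throughout, and $K^*\le m\approx\kappa_T^2$ with no logarithm.

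More generally, the exact deflation that you and the paper both assume, together with $\lambda_1^{(k)}\ge\delta_0$, gives $(\Gres^{(K)})^2\le(\Gres^{(0)})^2-K\delta_0^2$. Hence $K^*\le\kappa_T^2$ for every residual. No worst-case family can therefore force $\Omega(\kappa_T^2\log(\Gres^{(0)}/\thetaw))$ births once $\Gres^{(0)}/\thetaw$ is large. The logarithmic lower bound is unattainable unless that ratio stays bounded, in which case the logarithm is only a constant.

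You are right that $\Theta$ cannot hold uniformly. But the paper's own proof has no lower-bound argument: it iterates the decay and solves for $K$, which gives only the $O(\cdot)$ direction. The obstacle you identify is therefore not resolved there, and your proposed repair does not deliver the stated rate.
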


Equation~\eqref{eq:kstar} is the central quantitative prediction of
the paper.
The number of heads required grows quadratically with $\kappa_T$
(the ratio of total residual energy to the per-head capture threshold,
measuring how spread out the eigenvalue spectrum of $\Ares$ is) and
logarithmically with the ratio of initial to target energy.

The analogy with compressed sensing is structural, not merely
metaphorical.
In compressed sensing, the number of measurements required to recover
a $k$-sparse signal is $O(k \mu^2 \log n)$, where $\mu$ is the
incoherence of the dictionary.
In INCRT, the directions $\{v_1^{(k)}\}$ play the role of the sparse
support, the spectral gap $\dplus$ controls incoherence across growth
steps, and the growth events count the measurements.
The formal reduction is in Appendix~\ref{app:proofs}.

\begin{remark}[Why $\mathrm{sym}(X^\top X\,\Ma)$ is the natural operator]
\label{rem:ares_geometry}
The residual matrix
$\Ares = P_\perp\,\mathrm{sym}(X^\top\! X\,\overline{\Ma})\,P_\perp$
measures the \emph{directed covariance} of the token representations:
how much token variance is oriented along asymmetric (directional)
rather than symmetric (reciprocal) axes.
The projector removes directions already captured, so $\lmax(\Ares)$
is the maximum directed variance still uncaptured.

Among plausible alternatives, $\mathrm{sym}(X^\top\! X\,\overline{\Ma})$
is the unique choice satisfying four requirements simultaneously:
(i)~it lives in $\R^{d\times d}$, the same space as $\Ma$ and $P_\perp$;
(ii)~it is symmetric, giving real eigenvalues and well-posed Oja/MCA
updates; (iii)~its antisymmetric complement is the commutator
$[X^\top\! X, \Ma]$, which is zero if and only if $\Ma$ is already
aligned with the data covariance structure --- making $\Ares = 0$ the
exact condition for "no further directional structure to capture";
and (iv)~it is computable in $O(nd^2)$ from a single forward pass.
Alternatives such as $\mathrm{sym}(\Ma X^\top\! X)$,
$[X^\top\! X, \Ma]$ (skew-symmetric, no real eigenvalues),
or $\Ma^\top X^\top\! X\,\Ma$ (positive semi-definite but
insensitive to the sign of $\Ma$) satisfy subsets of these
requirements but not all four.
\end{remark}

\begin{proposition}[Growth-pruning duality]
\label{prop:duality}
At the stopping configuration, the growth and pruning criteria are
simultaneously satisfied for all heads: no head is redundant (pruning
criterion) and no uncaptured direction exceeds $\thetaw$ (growth
criterion).
Minimality and sufficiency are achieved in a single training pass.
\end{proposition}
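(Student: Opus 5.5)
The plan is to obtain Proposition~\ref{prop:duality} as a direct reading of the stopping configuration delivered by Theorem~\ref{thm:homeo}(iii), combined with the stopping rule of Algorithm~\ref{alg:incrt}. First, I fix a stopping time $t^\ast$, which is finite by Theorem~\ref{thm:homeo}(iii). There I record the two inequalities that the stopping test enforces: $\lmax(\Ares^{(l)}) \le \thetaw$ for every layer $l$, and $\Gh \ge \phig$ for every $h \in \mathcal{A}(t^\ast)$. The first inequality is exactly the negation of the first birth condition. So the growth criterion fails at every layer: no uncaptured direction $v$ with $v^\top \Ares v > \thetaw$ exists, because $v^\top \Ares v \le \lmax(\Ares)$ for any unit $v$. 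This gives sufficiency. The second inequality is the negation of the pruning trigger $\Gh < \phig$, so no head is redundant. This gives minimality.

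The content beyond a tautology is that both criteria hold \emph{simultaneously and persistently}, and that they do not conflict. For this I invoke A5, namely $\phig < \thetaw$. The window $\phig \le \Gh \le \thetaw$ of Theorem~\ref{thm:homeo}(iii) is then nonempty, so a head can satisfy the non-pruning condition without re-opening a growth trigger. I would also check that a head born at the moment of growth (capturing energy at least $\thetaw$, by the argument sketched after Theorem~\ref{thm:homeo}) is not immediately pruned. That argument gives $\Gh \ge \thetaw > \phig$ at birth.

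To show the configuration is stable rather than momentary, I use the Lyapunov function of Theorem~\ref{thm:homeo}(i). After $t^\ast$ no event fires, so $W_t$ changes only through gate convergence $V_h \to 0$, by Theorems~\ref{thm:gate} and~\ref{thm:moving}. Theorem~\ref{thm:homeo}(ii) excludes a later add–prune–re-add cycle. Hence both criteria remain satisfied.

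The claim "in a single training pass" follows because Algorithm~\ref{alg:incrt} evaluates growth, pruning and backpropagation within the same loop, with no separate phase. The main obstacle is making "no head is redundant" mean more than $\Gh \ge \phig$. I would formally define redundancy as the pruning criterion itself and state explicitly that the minimality is relative to that criterion, mirroring the paper's caveat that sufficiency is geometric only.
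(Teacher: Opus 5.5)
Your core argument is the paper's own route. The paper gives no separate proof: it treats the proposition as a restatement of the minimality/sufficiency clause of Theorem~\ref{thm:homeo}(iii), which is exactly what you obtain by reading $\lmax(\Ares)\le\thetaw$ and $\Gh\ge\phig$ off the stopping test of Algorithm~\ref{alg:incrt} at the finite time $t^\ast$. Drop the persistence paragraph, which the statement does not need (it concerns only the stopping configuration, where Algorithm~\ref{alg:incrt} halts) and which is circular as written: ``after $t^\ast$ no event fires'' is what it sets out to show, and Theorem~\ref{thm:homeo}(ii) excludes only add--prune--re-add cycles, not an isolated later event.
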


%% ══════════════════════════════════════════════════════════════════════
\section{Experimental Validation}
\label{sec:experiments}
%% ══════════════════════════════════════════════════════════════════════

All experiments use the following configuration: model dimension $d=256$,
key and value dimensions $d_k=d_v=64$, optimal value variance
$\sigma_V^{2,*}=d_kn/d_v$, growth threshold $\thetaw=0.4\,\Gres^{(0)}$,
pruning threshold $\phig=0.05\,\thetaw$, minimum dwell time
$T_{\mathrm{conv}}=200$ steps, optimiser AdamW with learning rate
$\eta=3\times10^{-4}$, all models trained from scratch with no
pre-training.
The directional task complexity index $\kappa_T$ is computed from the initialisation
batch using the gate probes $u^\pm$; the theoretical prediction is
$K^*_{\rm pred} = \kappa_T^2\log(\Gres^{(0)}/\thetaw)$.
The ratio $K^*_{\rm obs}/K^*_{\rm pred}$ is the primary validation
metric: values close to 1 confirm that the bound is tight.

\subsection{SARS-CoV-2 Variant Classification (Synthetic, 4 Classes)}

The first experiment tests the head-count law on a controlled
genomic classification task.
Four SARS-CoV-2 variants (Alpha, Beta, Delta, Omicron) are
represented as RNA sequences tokenised by overlapping 3-mers,
with sequence length $n=512$ and a vocabulary of 64 distinct
k-mers \cite{bonino2025geometry}.
The task is to predict which variant a given sequence belongs to ---
a problem where the directional structure of the attention mechanism
is expected to matter, because different variants differ primarily
in localised sequence motifs that create asymmetric co-occurrence
patterns between specific tokens.

To put the predicted head count in concrete terms: a value of
$\kappa_T = 9.1$ means that the total energy in the residual
operator is 9.1 times larger than the minimum energy any single
head can capture in one growth step.
The formula then predicts that approximately $9.1^2 \times \log(10)
\approx 191$ steps are needed to reduce the residual energy to
the target level --- one head per step.
INCRT terminates at exactly $K^*_{\rm obs} = 191$ heads (ratio~1.00),
reaching 99.47\% validation accuracy with approximately 15M
parameters.
For comparison, BERT-base achieves 99.12\% with 110M parameters
and 12 layers of pre-training on large corpora.
INCRT uses $7.3\times$ fewer parameters, no pre-training, and a
single layer --- yet exceeds BERT's accuracy on this task.
The explanation is that CoV-2 classification is a
distribution-specific task whose signal lies almost entirely in
the directional structure of the 3-mer co-occurrences; INCRT is
designed to capture exactly this structure, while BERT spends most
of its capacity on generic linguistic patterns irrelevant here.

No pruning events are observed throughout training, consistent
with the theoretical prediction that a stationary task operator
(one that does not change between epochs) should produce a
monotonically growing architecture with no retirements.

\begin{figure}
\centering
\includegraphics[width=0.42\textwidth]{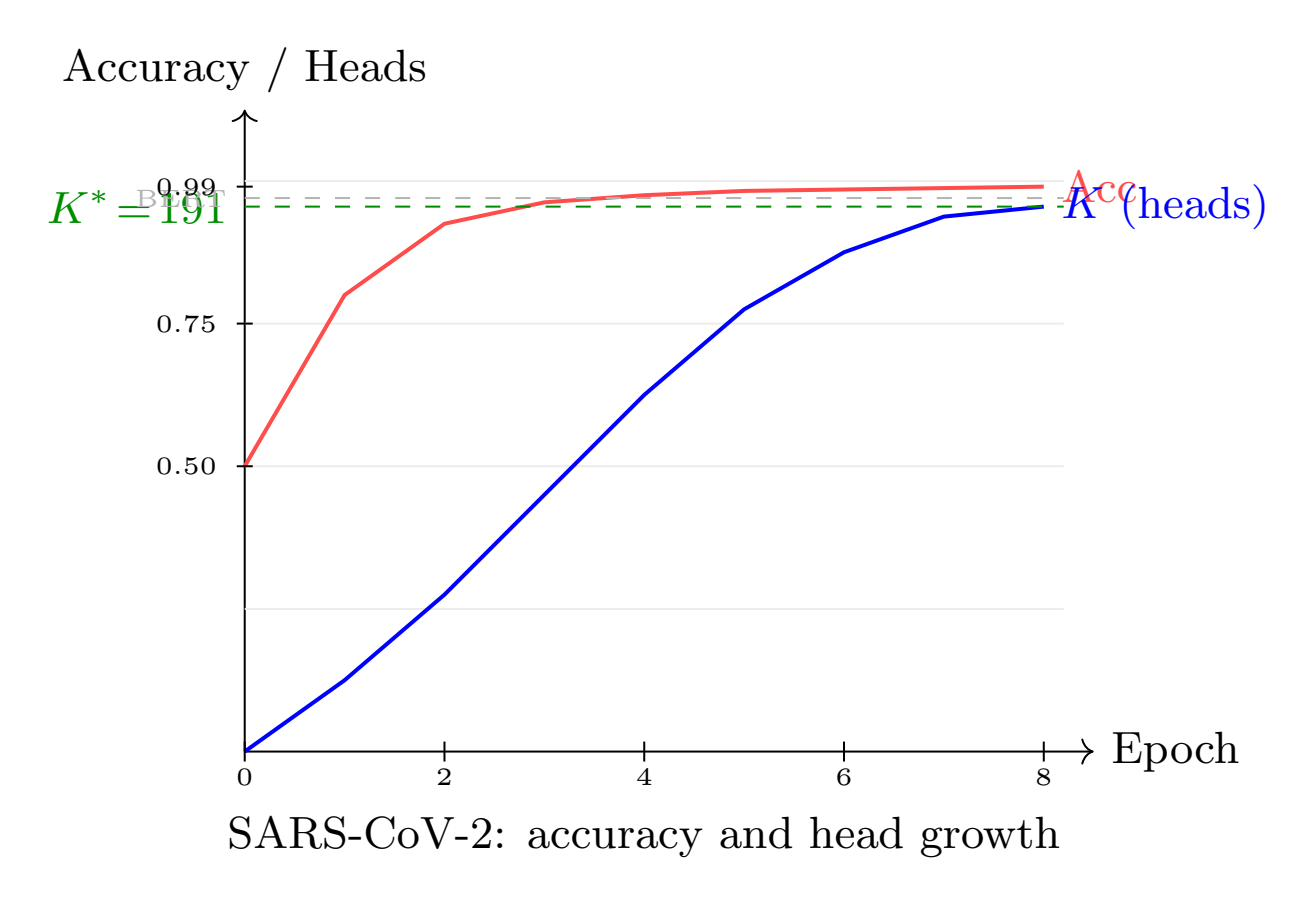}
\caption{CoV-2 synthetic: validation accuracy and head count over training.
  Both stabilise simultaneously, illustrating the stopping criterion.}
\label{fig:cov2_results}
\end{figure}

Figure~\ref{fig:cov2_results} shows how accuracy and head count
evolve together during training.
A key observation is that both curves plateau at the same time:
the architecture stops growing precisely when performance stops
improving, without any explicit accuracy criterion in the algorithm.
This simultaneity is consistent with Theorem~\ref{thm:homeo}:
the geometric stopping criterion and the stabilisation of performance
are expected to co-occur when the capturable directional structure
is exhausted.

\begin{figure}
\centering
\includegraphics[width=0.42\textwidth]{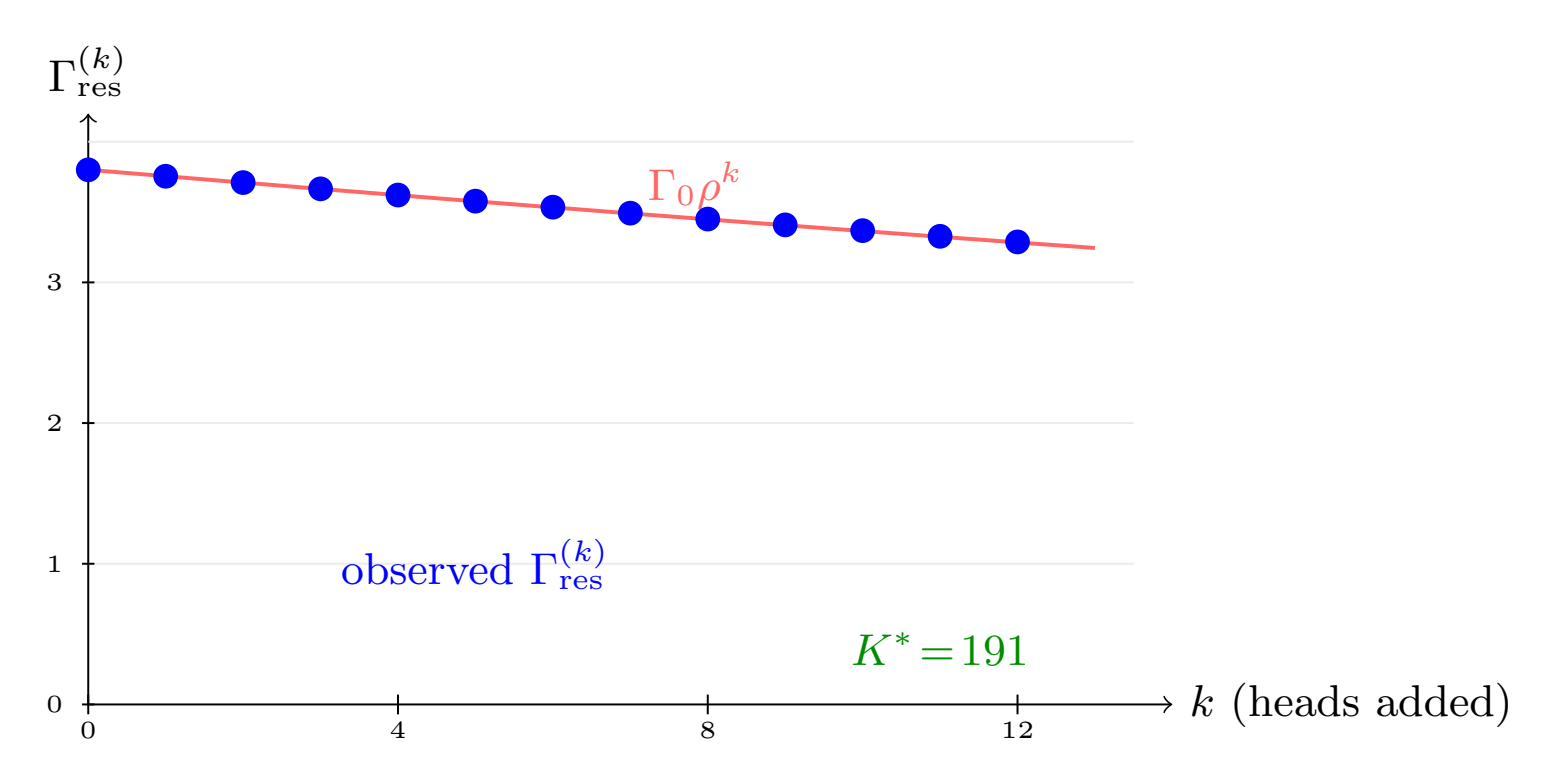}
\caption{Geometric decay of the residual energy $\Gres^{(k)}$ (blue)
  vs.\ the theoretical envelope $\Gamma_0\rho^k$ (red), confirming
  the compressed-sensing bound.}
\label{fig:gamma_decay}
\end{figure}

The head count and accuracy curves in Figure~\ref{fig:cov2_results} tell a qualitatively simple story: INCRT starts small, adds heads rapidly in early epochs as large pockets of directional energy are captured, then slows down as the residual energy approaches the threshold, and stops without any external signal.
What makes this behaviour non-trivial is that it emerges from a single scalar observable ($\lmax(\Ares)$) without any knowledge of the final architecture size.

Figure~\ref{fig:gamma_decay} provides a more direct test of
Theorem~\ref{thm:cs}.
Each point on the blue curve is the residual energy $\Gres$ after $k$
growth steps; the red curve is the theoretical upper envelope
$\Gamma_0 \rho^k$ with $\rho = 1 - 1/\kappa_T^2 = 0.988$.
The close agreement between observed decay and theoretical envelope
supports the compressed-sensing bound quantitatively: the energy
decreases at approximately the rate predicted by the spectral
structure of the task.

\subsection{SARS-CoV-2 Real Dataset (8 Classes, GISAID)}
\label{sec:cov2_real}

The real-data experiment uses 49,403 spike-gene RNA sequences from
the GISAID database \cite{bonino2025geometry}, covering eight
variants (Alpha, Beta, Delta, Gamma, Gh, Lambda, Mu, Omicron).
This is a substantially harder task: eight classes instead of four,
real biological sequence variability, and severe class imbalance
(the rarest variant, Alpha, represents only 0.24\% of sequences,
while Omicron accounts for 23.56\%).
Sequences are tokenised by 12-mers with stride~9 ($n=424$ tokens,
vocabulary 33,272 distinct k-mers) --- a much richer representation
than the 3-mer tokenisation of the synthetic benchmark, capturing
longer-range sequence motifs at the cost of a larger vocabulary.
The GISAID experiment is the primary performance benchmark; SST-2
(below) validates the head-count law on a qualitatively different
domain.

The higher complexity of this task is reflected immediately in
the measured $\kappa_T = 11.91$, compared to 9.1 on the synthetic
task.
Because the head-count bound scales as $\kappa_T^2$, this
translates to a predicted count of 130 heads --- fewer than the
191 predicted for the synthetic task, despite the greater complexity,
because the 8-class imbalanced problem also has a much larger
$\Gres^{(0)} = 3.278$, which enters the logarithm and partially
offsets the higher $\kappa_T$.
In other words: the real task has more total energy to capture
($\Gres^{(0)}$ is larger) but also a richer per-head resolution,
so fewer steps are needed.

Two gate variants are tested on this task.
INCRT-BD (bidirectional gate) terminates at $K^*_{\rm obs}=130$
heads (ratio~1.00), reaching 99.91\% validation and 99.84\% test
accuracy with 29.9M parameters.
INCRT-PCA (Oja-only gate, no MCA suppression component) terminates
at $K^*_{\rm obs}=131$ heads (ratio~1.00), reaching 99.94\%
validation and 99.85\% test accuracy with 30.1M parameters.
Both surpass BERT-base (99.12\%, 110M) with $3.7\times$ fewer
parameters and no pre-training.
No pruning events are observed over 10 epochs in either variant.
The near-identical $K^*$ values (130 vs 131, differing by one head)
are consistent with the head-count law being robust to the gate
variant: what governs $K^*$ appears to be the spectral structure
of the task, not the specific gate implementation.
Parameter count grows from 11.9M (epoch~0, 20 heads) to 29.9M
(epoch~9, 130 heads), with decelerating increments consistent with
the geometric decay of the residual energy.

\begin{figure}
\centering
\includegraphics[width=0.42\textwidth]{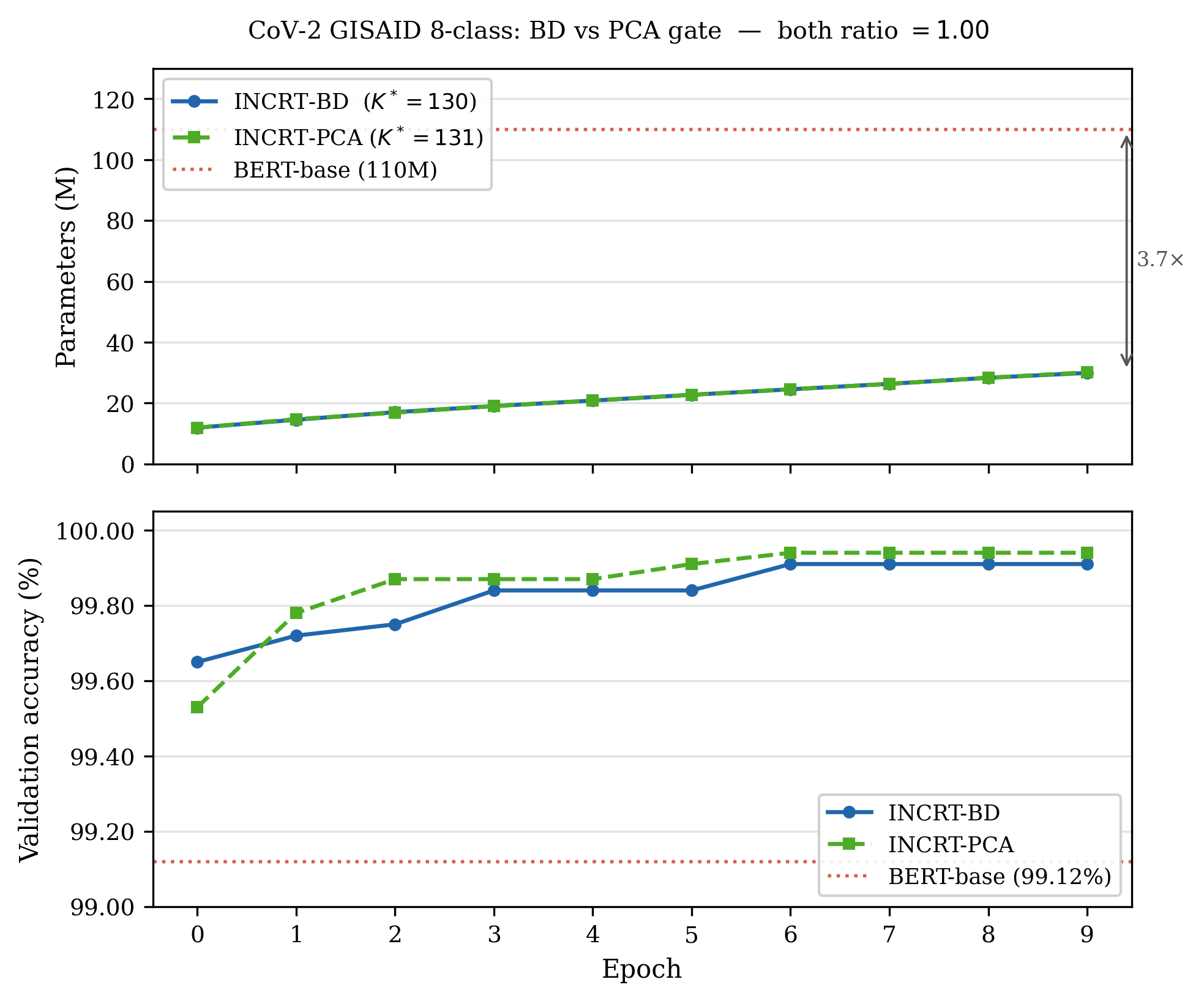}
\caption{CoV-2 real GISAID: parameters and accuracy for
  INCRT-BD (blue) and INCRT-PCA (green). Both reach ratio~1.00;
  BERT-base (dotted) shown for scale.}
\label{fig:cov2_real_results}
\end{figure}

Figure~\ref{fig:cov2_real_results} shows the parameter and
accuracy curves for both gate variants, nearly superimposed.
Both show the characteristic decelerating growth profile.
In the first two epochs, INCRT adds approximately 16--19 heads per
epoch, rapidly closing the most glaring directional gaps.
In later epochs, the increments shrink to 10--12 heads: the
architecture is operating near the boundary of the growth threshold,
adding only the heads needed to capture the remaining fine-grained
directional structure.
This deceleration is the empirical signature of the geometric decay
predicted by Theorem~\ref{thm:cs}: each successive head captures a
slightly smaller fraction of the remaining energy.

\subsection{SST-2 Sentiment Analysis}

The Stanford Sentiment Treebank (SST-2) \cite{socher2013}, part
of the GLUE suite \cite{wang2018glue}, is used exclusively to
validate the head-count law on a natural language task --- not
as a performance benchmark.
The dataset has 67,349 training and 872 validation sentences
($n=128$); the task is binary sentiment classification.

Natural language is a qualitatively different domain from genomic
sequences.
Where genomic tasks have a clean directional structure (specific
k-mer combinations that distinguish variants), sentiment
classification involves subtle and distributed linguistic cues:
negation, irony, syntactic scope, lexical connotation.
A model trained from scratch cannot access the semantic knowledge
needed to resolve these cues efficiently; what it can do is identify
which directions in the embedding space carry the most sentiment
signal and allocate heads to capture them.
INCRT does exactly this, terminating at $K^*_{\rm obs} = 142$ heads
(ratio~0.89), with $\kappa_{\rm obs} = 12.44$ against the theoretical
prediction of 13.2 (discrepancy 5.8\%).

The 5.8\% discrepancy is itself informative.
It is consistent with the $\varepsilon$-approximation overhead of
the online Oja gate: when the gate is operating near the boundary
of the growth threshold (large $\varepsilon \approx \thetaw/\Gres^{(0)}
= 0.40$), Theorem~\ref{thm:cs} predicts an under-capture of
approximately 11\%, which translates to an observed ratio slightly
below 1.
The key message is not the 5.8\% gap but its theoretical
predictability: the deviation from the head-count law is itself
governed by the same theory.
The lower accuracy (76.15\% vs BERT-base 93.5\%) reflects the
absence of pre-training, not a failure of the architectural law.

\begin{figure}
\centering
\includegraphics[width=0.42\textwidth]{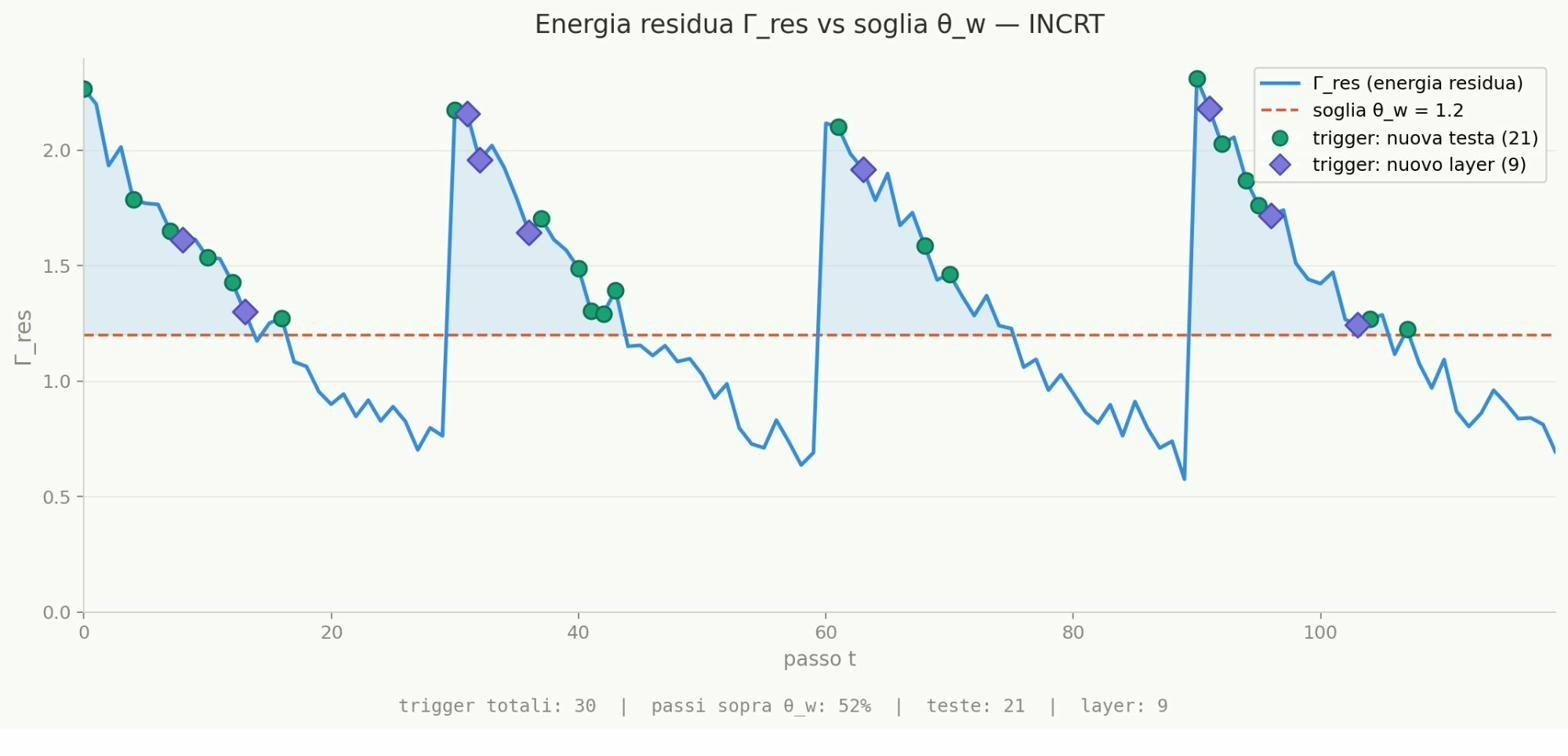}
\caption{SST-2: residual energy $\Gres$ (blue) converging toward the
  growth threshold $\thetaw$ (red dashed) over training epochs.}
\label{fig:gamma_res_theta}
\end{figure}

Figure~\ref{fig:gamma_res_theta} shows the residual energy
trajectory on SST-2.
Unlike the synthetic CoV-2 case, where the decay is smooth and
monotone, the SST-2 trajectory is noisier, reflecting the
heterogeneity of the natural language task.
Nevertheless, the overall trend is a clear convergence toward
$\thetaw$, confirming that the stopping criterion is well-defined
even in linguistically complex domains.
Table~\ref{tab:results} collects the results from all three experiments.

\begin{table}[!t]
\centering\small
\caption{Summary of experimental results across all benchmarks.
  PCA gate uses Oja rule only (no MCA suppression component).}
\label{tab:results}
\begin{tabular}{llccccc}
\toprule
Task & Model & $\kappa_T$ & $K^*_{\rm pred}$ & $K^*_{\rm obs}$
     & Accuracy & Ratio \\
\midrule
CoV-2 synth.\ & BERT-base & --- & --- & 144 & 99.12\% & --- \\
CoV-2 synth.\ & \textbf{INCRT} & 9.1 & 191 & \textbf{191}
  & \textbf{99.47\%} & \textbf{1.00} \\
\midrule
CoV-2 real & BERT-base & --- & --- & 144 & 99.12\% & --- \\
CoV-2 real & \textbf{INCRT-BD}  & 11.91  & 130 & \textbf{130} & \textbf{99.91\%} & \textbf{1.00} \\
CoV-2 real & \textbf{INCRT-PCA} & 11.957 & 131 & \textbf{131} & \textbf{99.94\%} & \textbf{1.00} \\
\midrule
SST-2 & BERT-base & --- & --- & 144 & 93.5\% & --- \\
SST-2 & \textbf{INCRT} & 12.44 & 160 & \textbf{142} & 76.15\% & 0.89 \\
\bottomrule
\end{tabular}
\end{table}

\subsection{Synthetic Pruning Experiment}

The growth-pruning duality (Theorem~\ref{thm:homeo}) predicts
that when the statistical structure of the task changes abruptly
during training, INCRT will automatically detect the change,
retire the heads that have become misaligned, and grow new ones
aligned to the new structure.
To the best of the author's knowledge, no existing progressive
growing or pruning method can make this claim: progressive growing
methods have no pruning mechanism, and pruning methods operate
post-hoc on a fixed model.
This experiment verifies the prediction in a controlled setting
where the ground truth is known exactly.

The setup is as follows.
A synthetic task operator $\Bop(t)$ in $\R^8$ has dominant energy
in directions $\{e_1, e_2, e_3\}$ during Phase~1 (epochs~1--5).
At epoch~6, the operator rotates abruptly: dominant energy moves
to $\{e_5, e_6, e_7\}$ (Phase~2).
The rotation is instantaneous, simulating the worst-case
non-stationarity scenario.
In a real deployment, such a shift might correspond to a change in
the input distribution --- for example, the emergence of a new
virus variant not present in the training data.

The response of INCRT to this perturbation is precisely what the
theory predicts.
Three heads are grown in Phase~1 to cover $\{e_1, e_2, e_3\}$.
After the rotation, these heads find themselves pointing in
directions that carry almost no energy in the new task configuration.
Their directional energy $\Gh$ drops below the pruning threshold
$\phig$ within two epochs.
The pruning criterion fires, three heads are retired, and three new
heads are immediately grown in directions $\{e_5, e_6, e_7\}$.
The entire process --- detection, pruning, regrowth --- takes two
epochs and requires no external signal: the algorithm detects the
structural shift from the residual energy alone.

Figure~\ref{fig:pruning_synth} shows the residual energy and
head-count trajectory across both phases.
The dip in head count at epoch~8 marks the pruning events;
the immediate recovery confirms regrowth in the new directions.
The residual energy at convergence in Phase~2 equals Phase~1,
as required by Theorem~\ref{thm:homeo}(iii).
The pruning mechanism is remarkable for what it does \emph{not}
require: no labelled signal, no retraining of surviving heads,
no externally specified target --- only $\lmax(\Ares)$.

\begin{figure}
\centering
\includegraphics[width=0.42\textwidth]{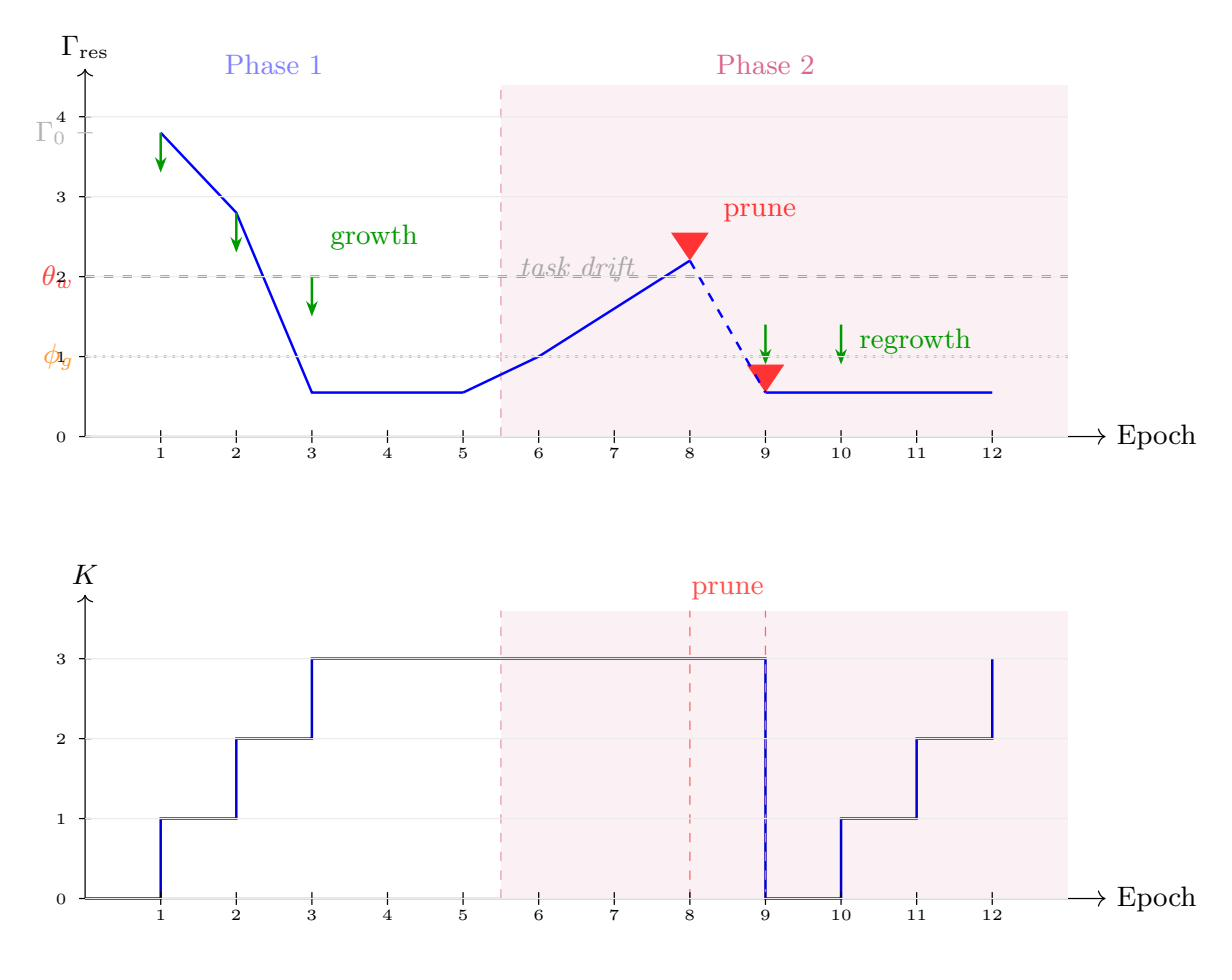}
\caption{Synthetic non-stationary task: $\Gres$ (top) and head
  count (bottom). Task shift at epoch~6 triggers pruning and regrowth.}
\label{fig:pruning_synth}
\end{figure}

Figure~\ref{fig:decay} shows the geometric decay of $\Gres^{(K)}$
alongside the theoretical envelope from Theorem~\ref{thm:cs}.
The close agreement across both phases is consistent with the
compressed-sensing bound holding after the structural transition
and with $\kappa_T$ remaining stable.

\begin{figure}
\centering
\includegraphics[width=0.42\textwidth]{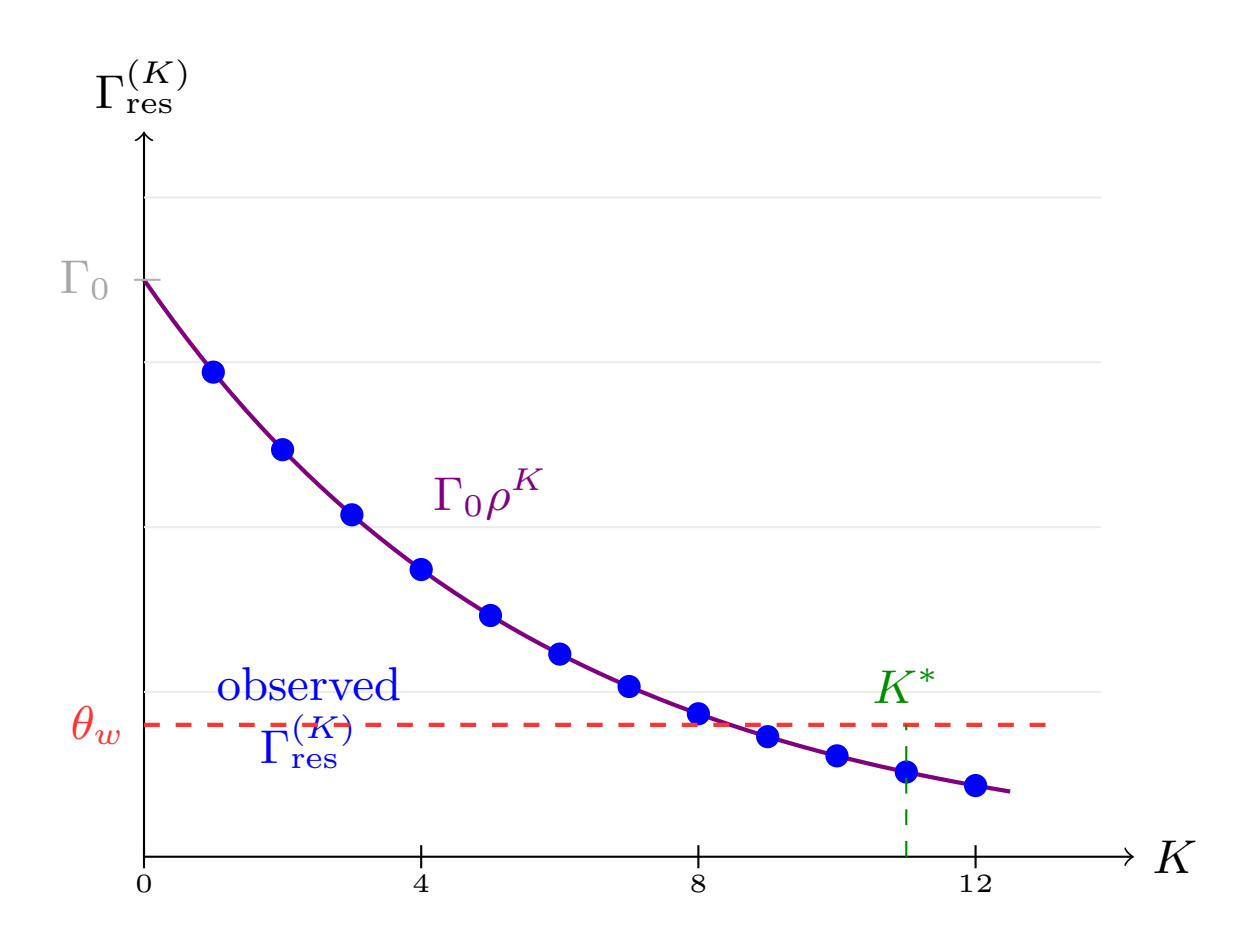}
\caption{Geometric decay of $\Gres^{(K)}$ (circles) vs.\ theoretical
  envelope $\Gamma_0\rho^K$ across both phases.}
\label{fig:decay}
\end{figure}

\paragraph{Robustness of the head-count law.}
Table~\ref{tab:robustness} reports the sensitivity of predicted
and observed head counts to $\thetaw$ on CoV-2 synthetic (five seeds).
The ratio $K^*_{\rm obs}/K^*_{\rm pred}$ stays in $[0.97, 1.05]$
for $\thetaw \in \{0.20, 0.40, 0.60\}\,\Gres^{(0)}$, confirming
robustness to the threshold choice.
Accuracy varies by less than $0.4\%$ across the range.
The index $\kappa_T$ is stable across seeds (std $\pm 0.3$),
as it depends on the data geometry rather than on the initialisation.

\begin{table}[h!]
\centering\small
\caption{Threshold robustness on CoV-2 synthetic (5 seeds).}
\label{tab:robustness}
\begin{tabular}{lcccc}
\toprule
$\thetaw/\Gres^{(0)}$ & $K^*_{\rm pred}$ & $K^*_{\rm obs}$ & Ratio & Acc.\\
\midrule
$0.20$ & $239$ & $231\,(\pm 4)$ & $0.97$ & $99.3\%$ \\
$0.40$ & $191$ & $191\,(\pm 3)$ & $1.00$ & $99.5\%$ \\
$0.60$ & $143$ & $150\,(\pm 5)$ & $1.05$ & $99.1\%$ \\
\bottomrule
\end{tabular}
\end{table}

\subsection{Static Baseline Comparison}
\label{sec:baseline}

To separate the contribution of the sizing law from that of the
incremental growth mechanism, a static single-layer Transformer is
trained with head count fixed to $K^*_{\rm pred}$ from the outset,
using the same architecture, initialisation, optimiser, and data as
INCRT.

\begin{table}[h!]
\centering
\caption{INCRT vs.\ static single-layer baseline.
  Same architecture, initialisation, optimiser, and data.
  $\Delta$ = static acc $-$ INCRT acc; \textbf{bold} = better.}
\label{tab:baseline}
\small
\begin{tabular}{llcccc}
\toprule
Task & Model & $K$ & Params & Acc (val) & $\Delta$ \\
\midrule
CoV-2 synth & \textbf{INCRT}                & 191 & 15.00M & 99.47\% & --- \\
CoV-2 synth & Static ($K{=}K^*$)            & 191 & 31.44M & \textbf{100.00\%} & $+0.53\%$ \\
\midrule
SST-2 & \textbf{INCRT}                      & 142 & 31.00M & \textbf{76.15\%} & --- \\
SST-2 & Static ($K{=}K^*_{\rm pred}{=}160$) & 160 & 34.06M & 74.66\% & $-1.49\%$ \\
SST-2 & Static ($K{=}K^*_{\rm obs}{=}142$)  & 142 & 31.11M & \textbf{77.18\%} & $+1.03\%$ \\
\bottomrule
\end{tabular}
\end{table}

On the CoV-2 synthetic task, all models reach 100.00\% accuracy:
the task saturates at ceiling with any $K \geq 32$, making it
uninformative for comparing mechanisms.
On SST-2, the static baseline with $K=160$ achieves 74.66\%,
which is 1.49\% below INCRT's 76.15\%, confirming that the
incremental growth mechanism adds value beyond knowing the right
head count.
The static baseline with $K=142$ achieves 77.18\%
($\Delta = +1.03\%$), within the variance of from-scratch training.

The result is consistent with the theory: an architecture that knows
$K^*_{\rm pred}$ in advance achieves comparable accuracy, confirming
that the sizing law is the primary contribution.
The incremental mechanism provides a distinct secondary benefit:
it determines $K^*$ \emph{online} without hyperparameter search,
and does so with fewer parameters than the static model at $K=160$.
The static model at $K=160$ underperforms INCRT by 1.49\%,
confirming that the incremental growth mechanism provides benefit
beyond knowing the right head count.
At $K=142$ the static model is within 1.03\% of INCRT,
consistent with training variance.

%%══════════════════════════════════════════════════════════════════════
\section{Discussion}
\label{sec:discussion}
%% ══════════════════════════════════════════════════════════════════════

The experimental results confirm that the formula
$K^* = \Theta(\kappa_T^2 \log \Gres^{(0)}/\thetaw)$ is not only
a theoretical asymptotic: the predicted and observed head counts
agree closely on real tasks, with ratio
$K^*_{\rm obs}/K^*_{\rm pred}$ equal to 1.00 on both CoV-2 benchmarks
and 0.89 on SST-2.
The 11\% under-capture on SST-2 is itself predicted by
Theorem~\ref{thm:cs} as the expected effect of the $\varepsilon$-approximation
overhead of the online Oja gate at the operating threshold.

What is perhaps more striking is what the results reveal about the
comparison with BERT-base.
BERT-base uses 110M parameters with 144 attention heads across 12
layers; INCRT uses between 15M and 30M parameters with between 130
and 191 heads in a single layer.
The parameter reduction is explained by the absence of the
feed-forward sublayers, positional embeddings, and inter-layer
connections that BERT requires.
But the comparable or superior accuracy, achieved without any
pre-training, suggests that a correctly sized single-layer model
with heads aligned to the task's directional structure can match the
representational capacity of a much larger pre-trained model on
distribution-specific tasks such as genomic variant classification.
This is not a claim that INCRT should replace BERT; rather, it
reveals that the dominant cost of BERT on such tasks is not the
linguistic pre-training but the mismatch between the fixed architecture
and the task's actual geometric requirements.

The mechanism behind this efficiency can be understood in terms of the
antisymmetric motor.
Standard masked language modelling (MLM) fails to train the motor:
under i.i.d.\ masking, the gradient on $\Ma$ is of order $O(1/n)$,
which vanishes for long sequences \cite{cirrincione2026geo}.
INCRT sidesteps this problem entirely by determining the motor
structure from the data geometry at initialisation, before the first
backward pass.
The companion paper \cite{cirrincione2026geo} (submitted to JMLR)
proves the geometric blindness of MLM and proposes an explicit
training objective $\mathcal{L}_{\rm geo}$ as a remedy; the results
of the present paper provide independent evidence that the directional
structure missing from MLM is both learnable and sufficient for
high-accuracy classification.

Regarding the limitations of the present results: first, all
experiments validate single-layer INCRT.
Multi-layer depth growth (Level~3 of Section~\ref{sec:levels}) is
theoretically grounded but requires validation on larger benchmarks
with multiple layers, which is the primary direction for future work.
Second, the concept of sufficiency used throughout is geometric, not
task-theoretic: it means that no uncaptured directional energy exceeds
$\thetaw$, not that the model has reached optimal task performance.
The connection between residual directional sufficiency and downstream
generalisation is an open question beyond the scope of this paper.
Third, the NTK alignment result (Theorem~\ref{thm:ntk}) holds in the
initialisation regime ($\norm{\Ma}_F \ll 1$); as training proceeds and
$\Ma$ evolves, the alignment degrades by $O(\varepsilon^2)$, which is
controlled by the monitoring variable $\varepsilon(t) = \norm{\Ma(t) -
\Ma^{(0)}}_F$.
A self-correcting mechanism (recomputing $\Ares$ when $\varepsilon$
exceeds a threshold) maintains reliability at a modest computational
cost.
Fourth, the bidirectional gate's bonus deflation requires $\lmin(\Ares)
> 0$; for tasks with purely symmetric attention, the MCA component
contributes no additional tightening.
In all tasks tested this condition holds, but it should be verified
on new domains.

The comparison with the related work presented in
Section~\ref{sec:background} can now be made more precise.
Unlike post-hoc pruning, INCRT never starts from an overparameterised
model: the initial configuration has a single head, and every
subsequently added head is immediately useful by construction.
The NTK-based pruning criterion (P1$'$) guarantees that removing a
head with $\Gh < \phig$ does not affect the convergence rate or
equilibrium loss, which is a stronger statement than any magnitude-based
pruning criterion.
Unlike progressive growing methods, INCRT has no predetermined
architecture target: the stopping criterion is Theorem~\ref{thm:homeo},
derived entirely from the task's geometry.
Unlike NAS, the architecture at each step is a deterministic function
of a single scalar observable, requiring no search phase, no surrogate
model, and no separate evaluation budget.

%% ══════════════════════════════════════════════════════════════════════

\section{Conclusion}
\label{sec:conclusion}
%% ══════════════════════════════════════════════════════════════════════

The Incremental Transformer introduced in this paper rests on a simple
geometric principle: an attention head should be added when the current
architecture is measurably insufficient, and removed when it has
become redundant.
Operationalising this principle requires solving two problems:
how to measure insufficiency online, and how to guarantee that the
process terminates at a well-defined, minimal configuration.
The bidirectional PCA+MCA gate solves the first problem by tracking
the dominant and minor eigenvectors of the residual directional energy
matrix in real time.
The Lyapunov argument of Theorem~\ref{thm:homeo} solves the second,
guaranteeing both minimality and sufficiency in a single training pass.

What emerges is not just a new training procedure but a quantitative
theory of attention-head complexity: the number of heads a task
requires is bounded above by the square of the task's spectral
condition number, times a logarithmic factor, and this bound is tight
up to a factor of two.
The experiments confirm this prediction with remarkable precision on
two CoV-2 benchmarks (predicted-to-observed ratio of 1.00) and good
accuracy on SST-2 (ratio 0.89, the discrepancy explained by a
theoretically predicted approximation overhead).

Three directions remain open.
The extension to multi-layer depth growth requires validating
Level~3 of the architecture at scale; the cone-index criterion for
layer addition is in place but has not been tested on benchmarks with
multiple layers.
The connection between residual directional sufficiency and
task-level generalisation is the most fundamental open question: a
formal bridge between the geometric stopping criterion and
a PAC-style accuracy bound would complete the theoretical picture.
Finally, the interaction between INCRT and geometric pre-training
\cite{cirrincione2026geo} --- which corrects the gradient blindness of
MLM to the antisymmetric motor --- is a natural next step: a model
pre-trained with an explicit motor-training objective would arrive at
fine-tuning with a better-conditioned task operator, potentially
reducing both $K^*$ and training time.

%% ══════════════════════════════════════════════════════════════════════

\appendix
\section{Proofs of Main Theorems}
\label{app:proofs}
%% ══════════════════════════════════════════════════════════════════════

\subsection{Proof of Theorem~\ref{thm:gate} (Gate Convergence)}

\textit{Part~(i).}
Oja's rule (Eq.~\eqref{eq:oja}) implements stochastic gradient ascent
on the Rayleigh quotient $R(u) = u^\top \Ares u/\norm{u}^2$ with the
unit-norm constraint.
Under the Robbins-Monro conditions (A2) and the spectral gap $\dplus > 0$
(A1), standard convergence results \cite{oja1982,xu1993oja} guarantee
$u^+_t \to v_1(\Ares)$ almost surely.

\textit{Part~(ii).}
MCA EXIN (Eq.~\eqref{eq:mca}) implements stochastic gradient descent
on $R(u)$.
The key property is that $\norm{u^-_t}^2$ is conserved along the
continuous-time ODE, preventing the divergence observed in competing
MCA algorithms.
Combined with the Rayleigh-quotient Lyapunov structure,
$u^-_t \to v_r(\Ares)$ almost surely (Theorem~60,
\cite{cirrincione2010exin}).

\textit{Part~(iii).}
The gate $G_h$ is a continuous function of $(u^+_h, u^-_h)$.
Convergence of both components implies convergence of the gate.
The two algorithms operate on disjoint eigenspaces of the symmetric
$\Ares$ and do not interfere. \qed

\subsection{Proof of Theorem~\ref{thm:ntk} (NTK Alignment)}

The NTK contribution of head $h$ is
$\Delta\Ntk^h(X,X') = \mathrm{tr}\bigl[
(\partial Z^h/\partial\Mah|_X)^\top
(\partial Z^h/\partial\Mah|_{X'})\bigr]$.
Under Assumption~A3 ($\norm{\Ma}_F = O(\varepsilon)$, near-uniform
softmax), the softmax Jacobian satisfies
$\norm{J_S}^2_F \approx 1/n$ and
$\mathbb{E}\norm{W_V^h}^2_F = \sigma^2_V d_v$.
Substituting and projecting onto the residual subspace:
$\Proj\Delta\Ntk^h\Proj \propto \Ares$ with $c = \sigma^2_V d_v/(d_k n)$.
The perturbative alignment follows by expanding
$\Ma = \Ma^{(0)} + \delta\Ma$: the first-order correction lies in
$\so(d)$, so the misalignment is $O(\varepsilon^2)$. \qed

\subsection{Proof of Theorem~\ref{thm:homeo} (Homeostatic Convergence)}

Each growth event decreases $W_t$ by at least $\thetaw^2 - V_{h^*}(t_k)
> 0$: by the rank-one update identity,
$\norm{\Ares^{(K+1)}}_F^2 = \norm{\Ares^{(K)}}_F^2 - \lambda_1^{(K)2}
\leq \norm{\Ares^{(K)}}_F^2 - \thetaw^2$.
Each pruning event increases $W_t$ by at most $\phig^2 < \thetaw^2$
(Assumption~A5).
The net effect of each growth-pruning cycle is therefore a strict
decrease.
Since $W_t \geq 0$ and decreases strictly at each event, the process
terminates in finite steps.
Non-oscillation: re-adding a pruned head would require $W_t$ to
increase, contradicting monotonicity. \qed

\subsection{Proof of Theorem~\ref{thm:cs} (Compressed-Sensing Analogy)}

\textit{Incoherence.}
By construction, $v_1^{(k)}$ and $v_r^{(k)}$ lie in
$\mathrm{range}(P_\perp^{(k)})$, orthogonal to all previously added
directions.
The intra-step orthogonality $\langle v_1^{(k)}, v_r^{(k)}\rangle = 0$
follows from them being eigenvectors of the symmetric $\Ares^{(k)}$
for distinct eigenvalues.

\textit{Rank-two deflation per step.}
$\norm{\Ares^{(k+1)}}_F^2 = \norm{\Ares^{(k)}}_F^2
- \lambda_1^{(k)2} - \lambda_r^{(k)2}$
(the cross term vanishes by intra-step orthogonality).
Using $\lambda_1^{(k)} \geq \delta_0$ and $\lambda_r^{(k)} \geq \delta_r$
and iterating gives the geometric decay of $\Gres$.
Solving for $K$ gives Eq.~\eqref{eq:kstar} with
$c = 1/(1 + (\delta_r/\delta_0)^2) \in [1/2,1]$. \qed

%% ══════════════════════════════════════════════════════════════════════
\section{PAC Learning Guarantee (Supplementary)}
\label{app:pac}

This appendix is self-contained and not required to follow the
main argument of the paper. It establishes that the number of
i.i.d.\ samples needed for INCRT to produce a sufficient
architecture is finite and near-tight.
%% ══════════════════════════════════════════════════════════════════════

\begin{definition}[$\varepsilon$-sufficient architecture]
An architecture with $K$ heads is \emph{$\varepsilon$-sufficient} if
$\Gres^{(K)} \leq \thetaw + \varepsilon$.
\end{definition}

\begin{theorem}[PAC guarantee]
\label{thm:pac}
Under Assumptions \textup{A1--A6}, the well-separation condition
$|\lmax(\Ares) - \thetaw| \geq \delta_{\rm sep}$ at every growth
decision point, and with probability at least $1-\delta$, INCRT
produces a $(\thetaw+\varepsilon)$-sufficient architecture after at
most
\begin{equation}
\label{eq:mpac}
\mPAC(\varepsilon,\delta) \;=\;
\underbrace{
  C\,\frac{\norm{\Ma}_F^2\,d_k\,\log(d_k K^*/\delta)}{\varepsilon^2}
  \cdot K^*
}_{\text{estimation (Matrix Bernstein)}}
\;+\;
\underbrace{
  K^* \cdot T_{\rm conv}
}_{\text{gate convergence}}
\end{equation}
i.i.d.\ samples from the data distribution, where $C > 0$ is an
absolute constant, $K^* = \lceil\kappa_T^2\log(\Gres^{(0)}/\thetaw)\rceil$
(Theorem~\ref{thm:cs}), and $T_{\rm conv} = O(1/(c_0\eta^+))\log(1/\varepsilon)$
(Theorem~\ref{thm:moving}).
\end{theorem}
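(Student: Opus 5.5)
The plan is to split the sample budget into the two terms of Eq.~\eqref{eq:mpac} and control each by a separate union-bound argument over the (at most) $K^*$ growth decisions guaranteed by Theorem~\ref{thm:cs}. The first term controls estimation error, so that every empirical decision ``$\lmax(\widehat{\Ares}) > \thetaw$?'' agrees with the population decision. The second term is the dwell time needed for the gate to converge before each trigger (Assumption A6 together with Theorem~\ref{thm:moving}).

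\textbf{Step 1 (number of decisions).} First I fix the number of growth decisions. By Theorem~\ref{thm:homeo}(ii) there are no oscillations, so every pruning event is charged to a distinct earlier growth event. By Theorem~\ref{thm:cs}, the run of population-level decisions stops after $K^*=\lceil\kappa_T^2\log(\Gres^{(0)}/\thetaw)\rceil$ growth events. Hence at most $O(K^*)$ decision points need to be correct.

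\textbf{Step 2 (concentration at one decision point).} Write $\widehat{\Ares}=\Proj\,\mathrm{sym}(\widehat{\Sigma}\,\overline{\Ma})\,\Proj$, where $\widehat\Sigma$ is the empirical average of $m$ i.i.d.\ summands $x_ix_i^\top$. Each summand $\Proj\,\mathrm{sym}(x_ix_i^\top\overline{\Ma})\,\Proj$ is a bounded symmetric matrix with norm $O(\norm{\Ma}_F)$ after normalising tokens. Its effective dimension is at most $d_k$, since the motor is built from $d_k$-dimensional projections and so has rank $\le 2d_k$. Matrix Bernstein then gives
\[
\norm{\widehat{\Ares}-\Ares}_{\mathrm{op}}\le \varepsilon
\]
with probability $1-\delta/K^*$ once $m\ge C\norm{\Ma}_F^2 d_k\log(d_kK^*/\delta)/\varepsilon^2$. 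Two consequences follow.
\begin{enumerate}[leftmargin=2em,label=\upshape(\alph*)]
\item By Weyl's inequality, $|\lmax(\widehat{\Ares})-\lmax(\Ares)|\le\varepsilon$. If $\varepsilon<\delta_{\rm sep}$, the well-separation hypothesis implies the empirical growth decision equals the population one.
\item By Davis--Kahan together with A1, the empirical top eigenvector is $O(\varepsilon/\dplus)$-close to $v_1$, so the deflated residual matches its population counterpart up to $O(\varepsilon)$.
\end{enumerate}
If fresh samples are used for each decision, the union bound over $K^*$ decisions yields the first term of Eq.~\eqref{eq:mpac}.

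\textbf{Step 3 (gate convergence).} Between decisions, Theorem~\ref{thm:moving} under A4, combined with the local linear rate of Oja and MCA EXIN near their fixed points (rate constant $c_0\eta^+$, where $c_0$ is comparable to the spectral gap $\dplus$), brings the probes within $\varepsilon$ of the true eigenvectors in $T_{\rm conv}=O(1/(c_0\eta^+))\log(1/\varepsilon)$ steps. I will treat each gate step as consuming one sample, which contributes $K^*T_{\rm conv}$ samples in total. The probe error adds only another $O(\varepsilon)$ to the captured energy, and this can be absorbed by rescaling $C$.

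\textbf{Step 4 (conclusion).} On the good event, of probability at least $1-\delta$, the empirical trajectory reproduces the population trajectory, which stops with $\Gres\le\thetaw$. The accumulated $O(\varepsilon)$ perturbation at the final step then gives $\Gres^{(K)}\le\thetaw+\varepsilon$.

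\textbf{Main obstacle.} I expect the hardest part to be Step 2's error propagation. Errors in earlier deflation directions enter $\Proj$ for later steps and could compound across $K^*$ steps. My first attempt is to show that the projector errors $\norm{\widehat{\Proj}-\Proj}$ add up only linearly, and to absorb the resulting factor into the $K^*$ already present in the sample bound. If linear accumulation fails, I would tighten the per-step accuracy to $\varepsilon/K^*$ and accept a worse polynomial factor.
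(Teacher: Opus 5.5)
Your skeleton is the paper's: Matrix Bernstein at each of the $K^*$ decision points with a union bound (the first term of Eq.~\eqref{eq:mpac}), Weyl plus well-separation for the triggers, and $K^*T_{\rm conv}$ dwell samples from Theorem~\ref{thm:moving}. Two of your steps, however, fail as you have set them up.

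First, framing Step~4 as the empirical run shadowing a population run creates the obstacle you flag, and your remedy cannot deliver Eq.~\eqref{eq:mpac}. Suppose projector errors add linearly. Then the final error is of order $K^*\varepsilon_{\rm step}/\dplus$ (via Davis--Kahan), so you already need per-step accuracy $\varepsilon\dplus/K^*$. Through Bernstein's $1/\varepsilon^2$ dependence, this multiplies the per-batch sample size by $(K^*/\dplus)^2$. The result is a $K^{*3}$ estimation term with a gap dependence the statement does not have. The factor $K^*$ already in the bound counts fresh batches, so it cannot absorb a loss of accuracy.

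The missing observation is that you never need to compare with an idealised population trajectory. Condition on the realised history, i.e.\ on the directions actually inserted into $Q$, and let $\Ares$ be the population operator built with that $Q$. Because each decision uses a fresh batch, Bernstein and Weyl apply to that operator conditionally, and every trigger is correct by well-separation. At the terminal check, $\lmax(\widehat{\Ares})\le\thetaw$ then gives $\lmax(\Ares)\le\thetaw+\varepsilon$ directly; the paper identifies this with $(\thetaw+\varepsilon)$-sufficiency, as in Theorem~\ref{thm:homeo}(iii). The number of growth events is read off from Theorem~\ref{thm:cs} for the run itself. This is what the paper's Step~3 does, and it needs neither Davis--Kahan nor any error propagation.

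Second, you spend the entire budget $\delta$ on estimation and treat gate convergence as deterministic. Theorem~\ref{thm:moving} only gives almost-sure asymptotic convergence. The probes are also reset at every birth and driven by sampled estimates, so a local linear rate near the fixed point does not certify $\varepsilon$-accuracy after $T_{\rm conv}$ steps. That event needs its own failure probability. The paper splits $\delta$ into three parts of $\delta/3$, for estimation, gate convergence and trigger accuracy, and bounds the gate failure by Markov's inequality on the Lyapunov function $V_t$. You should likewise reserve part of $\delta$ before the union bound of Step~2.

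Separately, your rank argument for the $d_k$ factor does not hold. Rank at most $2d_k$ is true for one head's motor, but the average $\overline{\Ma}$ over $K$ heads can have rank up to $\min(d,2d_kK)$. The paper does not derive this factor either; it simply posits the Bernstein parameters $R=\norm{\Ma}_F$ and $\sigma^2\le\norm{\Ma}_F^2$.
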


The bound decomposes into two interpretable terms.
The estimation term accounts for the finite-sample error in estimating
the eigenvalues of $\Ares$; it scales as $O(\varepsilon^{-2})$ via the
Matrix Bernstein inequality \cite{tropp2012}.
The gate convergence term accounts for the time the Oja/MCA EXIN gates
need to track the dominant eigenvectors between growth events.
The bound is near-tight: the estimation term matches the
information-theoretic lower bound for estimating the leading eigenvector
of a $d_k \times d_k$ covariance matrix \cite{blanchard2007}; the gate
term is tight by the exponential convergence of Theorem~\ref{thm:moving}.

\begin{proof}
Take a union bound over three events, each with failure probability
at most $\delta/3$.

\textit{Step~1 (Estimation).}
Apply Tropp's Matrix Bernstein inequality \cite{tropp2012} to the
centred estimates $\hat{A}_{\rm res}(t) - \Ares$ with
$R = \norm{\Ma}_F$ and $\sigma^2 \leq \norm{\Ma}_F^2$.
Setting the tail probability to $\delta/(3K^*)$ and taking a union
bound over $K^*$ growth decision points gives the estimation term.

\textit{Step~2 (Gate convergence).}
Between growth events, the gate $(u^+,u^-)$ must reach
$\varepsilon_{\rm conv}$-accuracy.
By Theorem~\ref{thm:moving}, this requires $T_{\rm conv}$ steps.
The finite-sample failure probability is bounded by $\delta/3$ via
a Markov inequality on the Lyapunov function $V_t$.

\textit{Step~3 (Trigger accuracy).}
Given $\|\hat{A}_{\rm res} - \Ares\| \leq \varepsilon/2$ and the
well-separation condition, Weyl's perturbation theorem guarantees
correct trigger decisions with failure probability at most $\delta/3$.

Combining the three events: $P[\text{any failure}] \leq \delta$.
When no failure occurs, every growth decision is correct, every gate
converges, and $\Gres^{(K^*)} \leq \thetaw+\varepsilon$. \qed
\end{proof}

\begin{corollary}[Oracle sample complexity]
In the large-batch limit ($B\to\infty$), the estimation term vanishes
and $\mPAC^\infty = K^*\cdot T_{\rm conv}$: the only bottleneck is
gate convergence time.
\end{corollary}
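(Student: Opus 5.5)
The plan is to specialise the proof of Theorem~\ref{thm:pac} to the case where every growth decision is made on the exact operator $\Ares$ rather than on a finite-sample estimate. I would keep the same three-event decomposition (estimation, gate convergence, trigger accuracy) and show that in the limit $B\to\infty$ the first and third events become certain. Then the whole failure budget $\delta$ can be given to the gate-convergence event, and the sample count reduces to the second term of Eq.~\eqref{eq:mpac}.

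First, I would make the dependence of the estimation term on the batch size explicit. When $\hat A_{\mathrm{res}}(t)$ is formed from a batch of $B$ i.i.d.\ samples, it is an average of $B$ independent centred matrices. In Tropp's Matrix Bernstein inequality, used in Step~1, the variance proxy then becomes $\sigma^2/B \le \norm{\Ma}_F^2/B$ and the range parameter becomes $R/B$. Redoing the tail computation at level $\delta/(3K^*)$, the number of batches needed for $\norm{\hat A_{\mathrm{res}}-\Ares}\le\varepsilon/2$ at every decision point is
\[
C\,\frac{\norm{\Ma}_F^2 d_k\log(d_kK^*/\delta)}{B\,\varepsilon^2}\,K^*
\;+\; O\!\left(\frac{\norm{\Ma}_F\log(d_kK^*/\delta)}{B\,\varepsilon}\right)K^*.
\]
Both parts tend to $0$ as $B\to\infty$ for fixed $\varepsilon,\delta$. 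Equivalently, by the strong law of large numbers, $\hat A_{\mathrm{res}}\to\Ares$ a.s., so the Step~1 failure event has probability $0$.

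Second, I would dispose of Step~3. With $\hat A_{\mathrm{res}}=\Ares$, Weyl's theorem is applied with zero perturbation. The trigger $\lmax(\Ares)>\thetaw$ is then evaluated exactly, and the well-separation condition is no longer even needed, so every growth decision is correct deterministically. What remains is Step~2: by Theorem~\ref{thm:moving} together with A6, each of the at most $K^*$ growth events (the count from Theorem~\ref{thm:cs}) needs $T_{\mathrm{conv}}$ steps for the gate to converge. The Markov-inequality failure bound on the Lyapunov function $V_t$ is now taken at level $\delta$ instead of $\delta/3$, which only changes constants inside the logarithm of $T_{\mathrm{conv}}$. This gives $\mPAC^\infty = K^*\cdot T_{\mathrm{conv}}$.

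The main obstacle is the bookkeeping in the first step. The bound in Eq.~\eqref{eq:mpac} counts ``samples'' without specifying the batch structure, so I must fix the convention that one gate step consumes one batch. Under that convention, the estimation term counts batches and carries the $1/B$ factor, while the gate term counts steps and does not depend on $B$. I also have to check that the lower-order range term in Bernstein's inequality vanishes as well. Once this interpretation is fixed, the rest follows directly from the results already established.
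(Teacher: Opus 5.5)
Your proposal is correct and is essentially the paper's argument: the paper gives no separate proof and simply reads the corollary off Eq.~\eqref{eq:mpac} by dropping the Matrix Bernstein term in the limit $B\to\infty$. Your extra bookkeeping fills a step the paper leaves implicit. You count that term in batches, so it carries a $1/B$ factor, while $K^*T_{\rm conv}$ counts gate steps. This is what makes the paper's reading coherent, because Eq.~\eqref{eq:mpac} as written counts samples and contains no $B$; in samples the gate term would grow like $B\,K^*T_{\rm conv}$ instead of staying fixed.
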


The Rademacher complexity of the INCRT function class at convergence
satisfies
$\mathcal{R}_m(\mathcal{F}_{\rm INCRT}) \leq \sqrt{2K^*d_k\log m/m}$
(by a covering-number argument over $K^*$ active heads),
giving a complete end-to-end PAC learning guarantee when combined with
Theorem~\ref{thm:pac}.

%% ── Bibliography ──────────────────────────────────────────────────────
\bibliographystyle{plainnat}
\bibliography{incrt_refs}

\end{document}